\newenvironment{derivation}{\begin{proof}[\textbf{Dualization}]}{\end{proof}}
\newtheorem{theorem}{Theorem}
\newtheorem{problem}[theorem]{Problem}
\newtheorem{lemma}[theorem]{Lemma}
\newtheorem{proposition}[theorem]{Proposition}
\newtheorem{definition}[theorem]{Definition}
\newcommand{\nbb}{\ensuremath{\mathbb{N}}}
\newcommand{\np}{\text{AE}}
\newcommand{\hcal}{\ensuremath{\mathcal{H}}}
\newcommand{\gcal}{\mathcal{G}}
\newcommand{\ebb}{\mathbb{E}}
\newcommand{\rbb}{\mathbb{R}}
\newcommand{\ycal}{\mathcal{Y}}
\newcommand{\vbeta}{\boldsymbol{\beta}}
\newcommand{\valpha}{\boldsymbol{\alpha}}
\title{Localized Multiple Kernel Learning---A Convex Approach}
\author[1]{Yunwen Lei\thanks{yunwelei@cityu.edu.hk}}
\author[2]{Alexander Binder\thanks{alexander\_binder@sutd.edu.sg}}
\author[3]{\"Ur\"un Dogan\thanks{udogan@microsoft.com}}
\author[4]{Marius Kloft\thanks{kloft@hu-berlin.de}}
\affil[1]{Department of Mathematics, City University of Hong Kong}
\affil[2]{Information Systems Technology and Design Pillar (ISTD), Singapore University of Technology and Design}
\affil[3]{Microsoft Research, Cambridge CB1 2FB, UK}
\affil[4]{Department of Computer Science, Humboldt University of Berlin}
\date{}
\begin{document}
\maketitle
\begin{abstract}
We propose a localized approach to multiple kernel learning that can be
formulated as a \emph{convex} optimization problem over a given cluster structure.
For which we obtain generalization error guarantees and
derive an optimization algorithm based on the Fenchel dual representation.
Experiments on real-world datasets from the application domains of computational biology and computer vision
show that convex localized multiple kernel learning
can achieve higher prediction accuracies than its global and non-convex local counterparts.

\medskip
\textbf{Keywords}: Multiple kernel learning, Localized algorithms, Generalization analysis
\end{abstract}

\section{Introduction}

Kernel-based methods 
such as support vector machines 
have found diverse applications
due to their distinct merits such as the descent computational complexity, high usability, and the solid mathematical foundation
\citep[e.g.,][]{SchSmo02}.
The performance of such algorithms, however, crucially depends on the involved kernel function
as it intrinsically specifies the feature space where the learning process is implemented, and thus provides a similarity measure on the input space.
Yet in the standard setting of these methods the choice of the involved kernel is typically left to the user.

A substantial step toward the complete automatization of kernel-based machine learning is
achieved in \citet{lanckriet2004learning}, who introduce the \emph{multiple kernel learning} (MKL)  
framework \citep{Goenen11}.
MKL offers a principal way of encoding complementary information with distinct base kernels and automatically learning an optimal combination of those \citep{sonnenburg2006large}. 
MKL can be phrased as a single convex optimization problem, which
facilitates the application of efficient numerical optimization strategies \citep{bach2004multiple,kloft2009efficient,sonnenburg2006large,rakotomamonjy2008simplemkl,xu2010simple,kloft2008automatic,yang2011efficient}
and theoretical understanding of the generalization performance of the resulting models \citep{srebro2006learning,cortes2010generalization,kloft2010unifying,KloBla11,kloft2012convergence,cortes2013learning,ying2009generalization,hussain2011improved,lei2014refined}.
While early sparsity-inducing approaches failed to live up to its expectations in terms of improvement over uniform combinations of kernels \citep[cf.][and references therein]{Cor09},
it was shown that improved predictive accuracy can be achieved by employing appropriate regularization \citep{kloft2011lp,kloft2008non}.


Currently, most of the existing algorithms fall into the \emph{global} setting of MKL,
in the sense that all input instances share the same kernel weights.
However, this ignores the fact that instances may require sample-adaptive kernel weights.

For instance, consider the two images of a horses given to the right.
Multiple kernels can be defined, capturing the shapes in the image and the color distribution over various channels.
On the image to the left,
\begin{wrapfigure}{r}{0.5\textwidth}
\vspace{-0.25cm}
\includegraphics[width=0.252\textwidth]{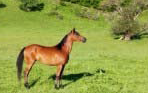}
\includegraphics[width=0.24\textwidth]{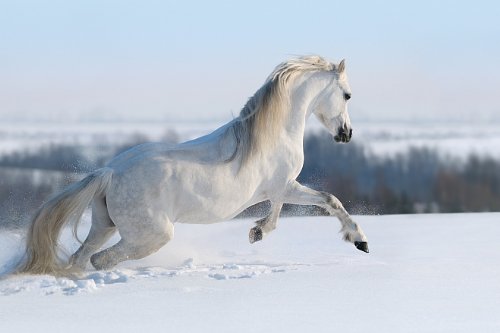}
\vspace{-.6cm}
\end{wrapfigure}
the depicted horse and the image backgrounds exhibit distinctly different color distributions,
while for the image to the right the contrary is the case.
Hence, a color kernel is more significant to detect a horse in the image to the left than for the image the right.
This example motivates studying localized approaches to MKL \citep{yang2009group,gonen2008localized,limultiple,lei2015theory,mu2011non,han2012probability}. 

Existing approaches to localized MKL (reviewed in Section \ref{sec:work}) optimize  \emph{non-convex} objective functions.
This puts their generalization ability into doubt.
Indeed, besides the recent work by \citep{lei2015theory}, the generalization performance of localized MKL algorithms
(as measured through large-deviation bounds) is poorly understood, which potentially could make these algorithms prone to overfitting.
Further potential disadvantages of non-convex localized MKL approaches include computationally difficulty in finding good local minima and
the induced lack of reproducibility of results (due to varying local optima).

This paper presents a \emph{convex} formulation of localized multiple kernel learning, which is formulated
as a single convex optimization problem over a precomputed cluster structure, obtained through a potentially
convex or non-convex clustering method.
We derive an efficient optimization algorithm based on Fenchel duality.
Using Rademacher complexity theory, we establish large-deviation inequalities for localized MKL,
showing that the smoothness in the cluster membership assignments crucially controls the generalization error.
Computational experiments on data from the domains of computational biology and computer vision show that
the proposed convex approach can achieve higher prediction accuracies than its global and non-convex local counterparts
(up to $+5\%$ accuracy for splice site detection).


\vspace{-0.1cm}
\subsection{Related Work\label{sec:work}}
\vspace{-0.1cm}

\citet{gonen2008localized} initiate the work on localized MKL by introducing gating models
$$\vspace{-0.03cm}f(x)=\sum_{m=1}^M\eta_m(x;v)\langle w_m,\phi_m(x)\rangle+b, \quad \eta_m(x;v)\propto\exp(\langle v_m,x\rangle+v_{m0})\vspace{-0.03cm}$$
to achieve local assignments of kernel weights, resulting in a non-convex MKL problem.
To not overly respect individual samples, \citet{yang2009group} give a group-sensitive formulation of localized MKL, where kernel weights vary at, instead of the example level, the group level.
\citet{mu2011non} also introduce a non-uniform MKL allowing the kernel weights to vary at the cluster-level and tune the kernel weights under the graph embedding framework.
\citet{han2012probability} built on \citet{gonen2008localized} by complementing the spatial-similarity-based kernels with probability confidence kernels reflecting the likelihood of examples belonging to the same class. \citet{limultiple} propose a multiple kernel clustering method by maximizing \emph{local} kernel alignments.
\citet{LiuWZY14} present sample-adaptive approaches to localized MKL, where kernels can be switched on/off at the example level by introducing a latent binary vector for each individual sample, which and the kernel weights are then jointly optimized via margin maximization principle.
\citet{moeller2016unified} present a unified viewpoint of localized MKL by interpreting gating functions in terms of local reproducing kernel Hilbert spaces acting on the data.
All the aforementioned approaches to localized MKL are formulated in terms of \emph{non-convex} optimization problems,
and deep theoretical foundations in the form of generalization error or excess risk bounds are unknown.
Although \citet{cortes2013learning} present a convex approach to MKL based on controlling the local Rademacher complexity,
the meaning of \emph{locality} is different in \citet{cortes2013learning}: it refers to the localization of the hypothesis class,
which can result in sharper excess risk bounds \citep{KloBla11,kloft2012convergence},
and is not related to localized multiple kernel learning. \citet{LiuWYDZ15} extend the idea of sample-adaptive MKL to address the issue with missing kernel information on some examples. More recently, \citet{lei2015theory} propose a MKL method by decoupling the locality structure learning with a hard clustering strategy from optimizing the parameters in the spirit of multi-task learning. They also develop the first generalization error bounds for localized MKL.

\vspace{-0.1cm}
\section{Convex Localized Multiple Kernel Learning}\label{sec:methodology}
\vspace{-0.1cm}



\vspace{-0.1cm}
\subsection{Problem setting and notation}
\vspace{-0.1cm}

Suppose that we are given $n$ training samples $(x_1,y_1),\ldots,(x_n,y_n)$ that are partitioned into $l$ disjoint clusters $S_1,\ldots,S_l$ in a probabilistic manner,
meaning that, for each cluster $S_j$, we have a function $c_j:\mathcal{X}\to[0,1]$ indicating the likelihood of $x$ falling into cluster $j$,
i.e., $\sum_{j\in\nbb_l}c_j(x)=1$ for all $x\in\mathcal X$.
Here, for any $d\in\nbb$, we introduce the notation $\nbb_d=\{1,\ldots,d\}$.
Suppose that we are given $M$ base kernels $k_1,\ldots,k_M$ with $k_m(x,\tilde{x})=\langle \phi_m(x),\phi_m(\tilde{x})\rangle_{k_m}$,
corresponding to linear models $f_j(x)=\langle w_j,\phi(x)\rangle+b=\sum_{m\in\nbb_M}\langle w_j^{(m)},\phi_m(x)\rangle+b,$
where $w_{j}=(w_j^{(1)},\ldots,w_j^{(M)})$ and $\phi=(\phi_1,\ldots,\phi_M)$.
Then we consider the following proposed model, which is a weighted combination of these $l$ local models:
\begin{equation}\label{decision-function}
  f(x)=\sum_{j\in\nbb_l}c_j(x)f_j(x)=\sum_{j\in\nbb_l}c_j(x)\big[\sum_{m\in\nbb_M}\langle w_j^{(m)},\phi_m(x)\rangle\big]+b.
\end{equation}

\vspace{-0.1cm}
\subsection{Proposed convex localized MKL method}
\vspace{-0.1cm}

Using the above notation, the proposed convex localized MKL model can be formulated as follows.

\begin{problem}[\textsc{Convex Localized Multiple Kernel Learning (CLMKL)---Primal}]\label{prob:primal}
Let $C>0$ and $p\geq1$. Given a loss function $\ell(t,y):\mathbb{R}\times\mathcal{Y}\to\mathbb{R}$ convex w.r.t. the first argument and cluster likelihood functions $c_j:\mathcal{X}\to[0,1]$, $j\in\nbb_l$, solve

\begin{equation}\label{soft-primal}
\tag{P}
\begin{split}
  \inf_{w,t,\beta,b}&\sum_{j\in\nbb_l}\sum_{m\in\nbb_M}\frac{\|w_j^{(m)}\|_2^2}{2\beta_{jm}}+ C\sum_{i\in\nbb_n}\ell(t_i,y_i)\\
  \mbox{s.t.}\;\;
  &\beta_{jm}\geq0,\;\sum_{m\in\nbb_M}\beta^p_{jm}\leq1\quad\forall j\in\nbb_l,m\in\nbb_M\\
  &\sum_{j\in\nbb_l}c_j(x_i)\big[\sum_{m\in\nbb_M}\langle w_j^{(m)},\phi_m(x_i)\rangle\big]+b=t_i,\;\forall i\in\nbb_n.
\end{split}
\end{equation}
\end{problem}

The core idea of the above problem is to use cluster likelihood functions for each example and separate $\ell_p$-norm constraint on the kernel weights $\beta_j:=(\beta_{j1},\ldots,\beta_{jM})$
for each cluster $j$ \citep{kloft2011lp} . Thus each instance can obtain separate kernel weights.
The above problem is convex, since a quadratic over a linear function is convex \citep[e.g.,][p.g. 89]{boyd2004convex}. 
Note that Slater's condition can be directly checked, and thus strong duality holds.


\vspace{-0.1cm}
\subsection{Dualization}\label{sec:dual}
\vspace{-0.1cm}

In this section we derive a dual representation of Problem \ref{prob:primal}.
We consider two levels of duality: a partially dualized problem, with fixed kernel weights $\beta_{jm}$, and the entirely dualized problem with respect to all occurring primal variables. From the former we derive an efficient two-step optimization scheme (Section ~\ref{sec:algorithms}). The latter allows us to compute the duality gap and thus to obtain a sound stopping condition for the proposed algorithm.
We focus on the entirely dualized problem here. The partial dualization is deferred to Supplemental Material \ref{supp:partial_dual}.

\vspace{-0.1cm}
\paragraph{Dual CLMKL Optimization Problem}

For $w_j=(w_j^{(1)},\ldots,w_j^{(M)})$, we define the $\ell_{2,p}$-norm by
$\|w_j\|_{2,p}:=\|(\|w_j^{(1)}\|_{k_1},\ldots,\|w_j^{(M)}\|_{k_M})\|_p=(\sum_{m\in\nbb_M}\|w_j^{(m)}\|^p_{k_m})^{\frac{1}{p}}.$
For a function $h$, we denote by $h^*(x)=\sup_\mu[x^\top\mu-h(\mu)]$ its Fenchel-Legendre conjugate.
This results in the following dual. 

\begin{problem}[\textsc{CLMKL---Dual}]\label{thm:soft-complete-dual}
The dual problem of \eqref{soft-primal} is given by
  \begin{align}\label{soft-complete-dual}
	  \tag{D}
    \sup_{\sum_{i\in\nbb_n}\alpha_i=0}\bigg\{-C\sum_{i\in\nbb_n}\ell^*(-\frac{\alpha_i}{C},y_i)-\frac{1}{2}\sum_{j\in\nbb_l}\Big\|\big(\sum_{i\in\nbb_n}\alpha_ic_j(x_i)\phi_m(x_i)\big)_{m=1}^M\Big\|^2_{2,\frac{2p}{p-1}}\bigg\}.
  \end{align}
\end{problem}

\vspace{-0.1cm}
\begin{derivation}
Using Lemma \ref{lem:micchelli} from Supplemental Material \ref{app:proofs_dual} to express the optimal $\beta_{jm}$ in terms of $w_j^{(m)}$, the problem \eqref{soft-primal} is equivalent to
\begin{equation}\label{soft-primal-complete}
\begin{split}
  \inf_{w,t,b}\;&\frac{1}{2}\sum_{j\in\nbb_l}\Big(\sum_{m\in\nbb_M}\|w_j^{(m)}\|_2^{\frac{2p}{p+1}}\Big)^{\frac{p+1}{p}}+C\sum_{i\in\nbb_n}\ell(t_i,y_i)\\
  \mbox{s.t.}\;\;&\sum_{j\in\nbb_l}\big[c_j(x_i)\sum_{m\in\nbb_M}\langle w_j^{(m)},\phi_m(x_i)\rangle\big]+b=t_i,\;\forall i\in\nbb_n.
\end{split}
\end{equation}
Introducing Lagrangian multipliers $\alpha_i,i\in\nbb_n$, the Lagrangian saddle problem of Eq. \eqref{soft-primal-complete} is
\begin{align}\label{soft-complete-dual-derivation}
  &\sup_{\alpha}\inf_{w,t,b}\frac{1}{2}\sum_{j\in\nbb_l}\Big(\sum_{m\in\nbb_M}\|w_j^{(m)}\|_2^{\frac{2p}{p+1}}\Big)^{\frac{p+1}{p}}+C\sum_{i\in\nbb_n}\ell(t_i,y_i)
  -\sum_{i\in\nbb_n}\alpha_i\Big(\sum_{j\in\nbb_l}c_j(x_i)\sum_{m\in\nbb_M}\langle w_j^{(m)},\phi_m(x_i)\rangle+b-t_i\Big)\nonumber\\
  &=\sup_{\alpha}\bigg\{-C\sum_{i\in\nbb_n}\sup_{t_i}[-\ell(t_i,y_i)-\frac{1}{C}\alpha_it_i]-\sup_b\sum_{i\in\nbb_n}\alpha_ib-\nonumber\\
  &\qquad\sup_w\Big[\sum_{j\in\nbb_l}\sum_{m\in\nbb_M}\big\langle w_j^{(m)},\sum_{i\in\nbb_n}\alpha_ic_j(x_i)\phi_m(x_i)\big\rangle-\frac{1}{2}\sum_{j\in\nbb_l}\Big(\sum_{m\in\nbb_M}\|w_j^{(m)}\|_2^{\frac{2p}{p+1}}\Big)^{\frac{p+1}{p}}\Big]\bigg\}\\
  &\stackrel{\text{def}}{=}\sup_{\sum_{i\in\nbb_n}\alpha_i=0}\bigg\{-C\sum_{i\in\nbb_n}\ell^*(-\frac{\alpha_i}{C},y_i)-\sum_{j\in\nbb_l}\Big[\frac{1}{2}\big\|\big(\sum_{i\in\nbb_n}\alpha_ic_j(x_i)\phi_m(x_i)\big)_{m=1}^M\big\|^2_{2,\frac{2p}{p+1}}\Big]^*\bigg\}\nonumber
\end{align}
The result \eqref{thm:soft-complete-dual} now follows by recalling that for a norm $\|\cdot\|$, its dual norm $\|\cdot\|_*$ is  defined by $\|x\|_*=\sup_{\|\mu\|=1}\langle x,\mu\rangle$ and satisfies:
$ (\frac{1}{2}\|\cdot\|^2)^*=\frac{1}{2}\|\cdot\|_*^2$ \citep{boyd2004convex}.
Furthermore, it is straightforward to show that $\|\cdot\|_{2,\frac{2p}{p-1}}$ is the dual norm of $\|\cdot\|_{2,\frac{2p}{p+1}}$.
\end{derivation}

\vspace{-0.1cm}
\subsection{Representer Theorem}
\vspace{-0.1cm}
We can use the above derivation to obtain a lower bound on the optimal value of the primal optimization problem \eqref{soft-primal}, from which we can compute the duality gap using the
theorem below. The proof is given in Supplemental Material \ref{supp_proof_repr}.

\begin{theorem}[\textsc{Representer Theorem}]\label{thm:repre}
For any dual variable $(\alpha_i)_{i=1}^n$ in \eqref{soft-complete-dual}, the optimal primal variable $\{w_j^{(m)}(\alpha)\}_{j,m=1}^{l,M}$ in the Lagrangian saddle problem \eqref{soft-complete-dual-derivation} can be represented as
  $$w_j^{(m)}(\alpha)\!=\!\big[\sum_{\tilde{m}\in\nbb_M}\|\!\sum_{i\in\nbb_n}\!\alpha_ic_j(x_i)\phi_{\tilde{m}}(x_i)\|_2^{\frac{2p}{p-1}}\big]^{-\frac{1}{p}}\big\|\!\sum_{i\in\nbb_n}\!\alpha_ic_j(x_i)\phi_m(x_i)\big\|_2^{\frac{2}{p-1}}\big[\!\sum_{i\in\nbb_n}\alpha_ic_j(x_i)\phi_m(x_i)\big].$$
\end{theorem}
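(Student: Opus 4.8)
The plan is to obtain $\{w_j^{(m)}(\alpha)\}_{j,m}$ as the (unique) maximizer of the inner problem in the Lagrangian saddle formulation~\eqref{soft-complete-dual-derivation} for fixed dual variables $\alpha$. Once the kernel weights $\beta$ have been eliminated via Lemma~\ref{lem:micchelli}, the only $w$-dependent part of that saddle problem is, writing $v_j^{(m)}:=\sum_{i\in\nbb_n}\alpha_ic_j(x_i)\phi_m(x_i)$,
\[
  \sup_{w}\Big[\sum_{j\in\nbb_l}\sum_{m\in\nbb_M}\big\langle w_j^{(m)},v_j^{(m)}\big\rangle-\frac12\sum_{j\in\nbb_l}\Big(\sum_{m\in\nbb_M}\|w_j^{(m)}\|_2^{\frac{2p}{p+1}}\Big)^{\frac{p+1}{p}}\Big],
\]
which decouples across clusters; setting $q:=\frac{2p}{p+1}$, cluster $j$ contributes exactly $\big(\tfrac12\|\cdot\|_{2,q}^2\big)^*(v_j)$, the Fenchel conjugate of $\tfrac12\|\cdot\|_{2,q}^2$ at $v_j$.

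First I would invoke the standard fact that $\sup_w[\langle w,v\rangle-f(w)]$ with convex $f=\tfrac12\|\cdot\|_{2,q}^2$ is attained precisely on $\partial f^*(v)$, together with $\big(\tfrac12\|\cdot\|^2\big)^*=\tfrac12\|\cdot\|_*^2$ and the dual-norm identity already used in the dualization, namely that $\|\cdot\|_{2,q'}$ with $\tfrac1q+\tfrac1{q'}=1$ (equivalently $q'=\tfrac{2p}{p-1}$) is dual to $\|\cdot\|_{2,q}$. Hence $f^*=\tfrac12\|\cdot\|_{2,q'}^2$, which is differentiable for $p>1$, so the maximizer is unique and equals $w_j=\nabla\big(\tfrac12\|\cdot\|_{2,q'}^2\big)(v_j)$. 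It then remains to compute this gradient blockwise by the chain rule (differentiating the outer power $(\cdot)^{2/q'}$, then $\|\cdot\|_2^{q'}$, then $\|\cdot\|_2$), giving
\[
  w_j^{(m)}=\Big(\sum_{\tilde m\in\nbb_M}\|v_j^{(\tilde m)}\|_2^{q'}\Big)^{\frac{2}{q'}-1}\|v_j^{(m)}\|_2^{q'-2}\,v_j^{(m)},
\]
and to substitute the identities $q'=\tfrac{2p}{p-1}$, $\tfrac{2}{q'}-1=-\tfrac1p$, $q'-2=\tfrac{2}{p-1}$ together with the definition of $v_j^{(m)}$ — which reproduces the claimed expression verbatim.

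An equivalent route, avoiding any appeal to conjugacy, is to write the first-order stationarity condition in $w_j^{(m)}$ directly: it reads $v_j^{(m)}=\big(\sum_{\tilde m}\|w_j^{(\tilde m)}\|_2^{q}\big)^{\frac2q-1}\|w_j^{(m)}\|_2^{q-2}w_j^{(m)}$, which forces $w_j^{(m)}$ to be a nonnegative multiple of $v_j^{(m)}$; plugging the ansatz $w_j^{(m)}=a_m v_j^{(m)}$ back in yields a scalar system for the $a_m$ that solves in closed form, and concavity of the objective certifies this as the global maximizer. I expect the delicate points to be bookkeeping rather than conceptual: keeping the chain of exponent identities straight, and handling the degenerate blocks $v_j^{(m)}=0$ (where the factor $\|v_j^{(m)}\|_2^{2/(p-1)}$ correctly forces $w_j^{(m)}=0$, consistently with $\nabla\|\cdot\|_2^{q'}(0)=0$ for $q'>1$) together with the boundary case $p=1$, where $q'=\infty$, the conjugate is non-smooth and the stated expression degenerates, so the theorem is to be read for $p>1$ (or recovered as a limit).
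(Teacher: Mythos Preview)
Your proposal is correct and follows essentially the same route as the paper: both identify the optimal $w_j$ as the maximizer of the inner $w$-problem, use the Fenchel relation $\big(\tfrac12\|\cdot\|_{2,q}^2\big)^*=\tfrac12\|\cdot\|_{2,q'}^2$ (equivalently the paper's $(\nabla f)^{-1}=\nabla f^*$) to conclude $w_j=\nabla\big(\tfrac12\|\cdot\|_{2,q'}^2\big)(v_j)$, and then compute this gradient blockwise via the chain rule (the paper packages this computation as Lemma~\ref{lem:gradient-pnorm}). Your added remarks on the degenerate cases $v_j^{(m)}=0$ and $p=1$ go slightly beyond the paper's presentation but are accurate.
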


\subsection{Support-Vector Classification}

For the hinge loss, the Fenchel-Legendre conjugate becomes $\ell^*(t,y)=\frac{t}{y}$ (a function of $t$) if $-1\leq\frac{t}{y}\leq0$ and $\infty$ elsewise. Hence, for each $i$, the term $\ell^*(-\frac{\alpha_i}{C},y_i)$ translates to $-\frac{\alpha_i}{Cy_i}$, provided that $0\leq\frac{\alpha_i}{y_i}\leq C$. With a variable substitution of the form $\alpha_i^{\text{new}}=\frac{\alpha_i}{y_i}$, the complete dual problem \eqref{soft-complete-dual} reduces as follows.
\begin{problem}[\sc CLMKL---SVM Formulation]
For the hinge loss, the dual CLMKL problem \eqref{soft-complete-dual} is given by:
\begin{equation}\label{soft-complete-dual-hinge}
\begin{split}
  \sup_{\alpha:0\leq\alpha\leq C,\sum_{i\in\nbb_n}\alpha_iy_i=0}\;&-\frac{1}{2}\sum_{j\in\nbb_l}\Big\|\big(\sum_{i\in\nbb_n}\alpha_iy_ic_j(x_i)\phi_m(x_i)\big)_{m=1}^M\Big\|^2_{2,\frac{2p}{p-1}}+\sum_{i\in\nbb_n}\alpha_i,\\
\end{split}
\end{equation}
\end{problem}

A corresponding formulation for support-vector regression is given in Supplemental Material \ref{sec:regression}.

\vspace{-0.1cm}
\section{Optimization Algorithms\label{sec:algorithms}}
\vspace{-0.1cm}
As pioneered in \citet{sonnenburg2006large}, we consider here a two-layer optimization procedure to solve the problem \eqref{soft-primal}
where the variables are divided into two groups: the group of kernel weights $\{\beta_{jm}\}_{j,m=1}^{l,M}$ and the group of weight vectors $\{w_j^{(m)}\}_{j,m=1}^{l,M}$.
In each iteration, we alternatingly optimize one group of variables while fixing the other group of variables.
These iterations are repeated until some optimality conditions are satisfied. To this aim, we need to find efficient strategies to solve the two subproblems.

It is not difficult to show (cf. Supplemental Material \ref{supp:partial_dual}) that, given fixed kernel weights  $\vbeta=(\beta_{jm})$, the CLMKL dual problem is given by
\begin{equation}\label{soft-partial-dual}
  \sup_{\valpha:\sum_{i\in\nbb_n}\alpha_i=0}-\frac{1}{2}\sum_{j\in\nbb_l}\sum_{m\in\nbb_M}\beta_{jm}\Big\|\sum_{i\in\nbb_n}\alpha_ic_j(x_i)\phi_m(x_i)\Big\|_2^2-C\sum_{i\in\nbb_n}\ell^*(-\frac{\alpha_i}{C},y_i),
\end{equation}
which is a standard SVM problem using the kernel
\begin{equation}\label{kernel-general-loss}
  \tilde{k}(x_i,x_{\tilde{i}}):=\sum_{m\in\nbb_M}\sum_{j\in\nbb_l}\beta_{jm}c_j(x_i)c_j(x_{\tilde{i}})k_m(x_i,x_{\tilde{i}})
\end{equation}
This allows us to employ very efficient existing SVM solvers \citep{chang2011libsvm}. 
In the degenerate case with $c_j(x)\in\{0,1\}$, the kernel $\tilde{k}$ would be supported over those sample pairs belonging to the same cluster.






Next, we show that, the subproblem of optimizing the kernel weights for fixed $w_j^{(m)}$ and $b$ has a closed-form solution.

\begin{proposition}[\textsc{Solution of the Subproblem w.r.t. the Kernel Weights}]\label{prop:mixture-update}
  Given fixed $w_j^{(m)}$ and $b$, the minimal $\beta_{jm}$ in optimization problem~\eqref{soft-primal} is attained for
  \begin{equation}\label{kernel-mixture-update}
    \beta_{jm}=\|w_j^{(m)}\|_2^{\frac{2}{p+1}}\Big(\sum_{k\in\nbb_M}\|w_j^{(k)}\|_2^{\frac{2p}{p+1}}\Big)^{-\frac{1}{p}}.
  \end{equation}
\end{proposition}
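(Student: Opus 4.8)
The plan is to exploit the fact that, with $w$ and $b$ held fixed, both the objective and the constraints of \eqref{soft-primal} decouple completely across clusters: the vector $\beta_j=(\beta_{j1},\dots,\beta_{jM})$ appears only in the terms $\sum_{m\in\nbb_M}\|w_j^{(m)}\|_2^2/(2\beta_{jm})$ and in its own constraint $\beta_j\ge0$, $\sum_{m\in\nbb_M}\beta_{jm}^p\le1$. Hence it suffices to solve, for each fixed $j$, the scalar problem $\min_{\beta\ge0,\ \sum_m\beta_m^p\le1}\sum_{m\in\nbb_M}a_m/\beta_m$ with $a_m:=\|w_j^{(m)}\|_2^2/2\ge0$, under the convention $0/0=0$ so that coordinates with $a_m=0$ may be set to $\beta_m=0$ at no cost (which is exactly what \eqref{kernel-mixture-update} returns in that case).

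First I would dispose of the degenerate coordinates: if $a_m=0$ we put $\beta_m=0$ and drop that index; if all $a_m=0$ the claim is trivial. On the remaining index set $\{m:a_m>0\}$ I would observe that any minimizer saturates the constraint $\sum_m\beta_m^p=1$ — scaling every $\beta_m$ up by a common factor strictly decreases the objective while remaining feasible up to the sphere — and that the minimum is attained in the relative interior, since the objective tends to $+\infty$ as any $\beta_m\downarrow0$.

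To finish cleanly I would invoke Hölder's inequality with conjugate exponents $\tfrac{p+1}{p}$ and $p+1$: writing $a_m^{p/(p+1)}=\big(a_m/\beta_m\big)^{p/(p+1)}\,\beta_m^{p/(p+1)}$ and summing gives
$$\sum_{m}a_m^{\frac{p}{p+1}}\le\Big(\sum_{m}\frac{a_m}{\beta_m}\Big)^{\frac{p}{p+1}}\Big(\sum_{m}\beta_m^{p}\Big)^{\frac{1}{p+1}}\le\Big(\sum_{m}\frac{a_m}{\beta_m}\Big)^{\frac{p}{p+1}},$$
so $\sum_{m}a_m/\beta_m\ge\big(\sum_{m}a_m^{p/(p+1)}\big)^{(p+1)/p}$, with equality in Hölder precisely when $a_m/\beta_m\propto\beta_m^{p}$, i.e. $\beta_m^{p+1}\propto a_m$. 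Combining this with $\sum_m\beta_m^p=1$ pins down the constant and yields $\beta_m=a_m^{1/(p+1)}\big(\sum_k a_k^{p/(p+1)}\big)^{-1/p}$. Substituting back $a_m=\|w_j^{(m)}\|_2^2/2$, the factors $2^{-1/(p+1)}$ in the numerator and $2^{1/(p+1)}$ coming from $\big(\sum_k a_k^{p/(p+1)}\big)^{-1/p}$ cancel, producing exactly \eqref{kernel-mixture-update}. (Equivalently, a KKT computation works: stationarity $-a_m/\beta_m^2+\lambda p\,\beta_m^{p-1}=0$ gives $\beta_m\propto a_m^{1/(p+1)}$ at once, and the multiplier is fixed by the active constraint.)

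The only delicate part — and the step I expect to require the most care in a fully rigorous write-up — is the boundary analysis: justifying the $0/0=0$ convention, checking that coordinates with $\|w_j^{(m)}\|_2=0$ neither obstruct attainment of the infimum nor contradict \eqref{kernel-mixture-update}, and confirming the decoupling across $j$ is legitimate. The remainder is a single application of Hölder together with its equality case.
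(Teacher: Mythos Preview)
Your argument is correct and follows essentially the same route as the paper: fix $w,b$, observe that the problem decouples across clusters $j$, and then solve the scalar problem $\min_{\beta\ge0,\,\|\beta\|_p\le1}\sum_{m}a_m/\beta_m$ for each $j$. The only difference is that the paper dispatches this last step by invoking Lemma~\ref{lem:micchelli} (due to Micchelli and Pontil), whereas you supply an inline proof of that lemma via H\"older's inequality with exponents $\tfrac{p+1}{p}$ and $p+1$ (and are more explicit about the boundary case $a_m=0$); this is a presentational rather than a substantive difference.
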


We defer the detailed proof to Supplemental Material \ref{supp:proof_pontil} due to lack of space.
To apply Proposition \ref{prop:mixture-update} for updating $\beta_{jm}$, we need to compute the norm of $w_j^{(m)}$,
and this can be accomplished by the following representation of $w_j^{(m)}$ given fixed $\beta_{jm}$: (cf. Supplemental Material \ref{supp:partial_dual})
\begin{equation}\label{weight-partial-representation-text}
  w_j^{(m)}=\beta_{jm}\sum_{i\in\nbb_n}\alpha_ic_j(x_i)\phi_m(x_i).
\end{equation}
The prediction function is then derived by plugging the above representation into Eq. \eqref{decision-function}.



The resulting optimization algorithm for CLMKL is shown in Algorithm~\ref{algorithm:wrapper}.
The algorithm alternates between solving an SVM subproblem for fixed kernel weights (Line 4) and
updating the kernel weights in a closed-form manner (Line 6). To improve the efficiency, we start with a crude precision and gradually improve the precision of solving the SVM subproblem.
The proposed optimization approach can potentially be extended to an interleaved algorithm where the optimization
of the MKL step is directly integrated into the SVM solver. Such a strategy can increase
the computational efficiency by up to 1-2 orders of magnitude (cf.~\citep{sonnenburg2006large} Figure 7 in \citet{kloft2011lp}).
The requirement to compute the kernel $\tilde{k}$ at each iteration can be further relaxed by updating only some randomly selected kernel elements.


\begin{algorithm2e}[htbp]
\small
\SetKwInOut{Input}{input}
  \caption{\small Training algorithm for convex localized multiple kernel learning (CLMKL).\label{algorithm:wrapper}}
    \Input{examples $\{(x_i,y_i)_{i=1}^n\}\subset\big(\mathcal{X}\times\{-1,1\}\big)^n$ together with the likelihood functions $\{c_j(x)\}_{j=1}^l$, $M$ base kernels $k_1,\ldots,k_M$.}
    \BlankLine
    initialize $\beta_{jm}=\sqrt[p]{1/M}, w_j^{(m)}=0$ for all $j\in\nbb_l,m\in\nbb_M$\\
    \While{Optimality conditions are not satisfied}{
    calculate the kernel matrix $\tilde{k}$ by Eq. \eqref{kernel-general-loss}\\
    compute $\alpha$ by solving canonical SVM with $\tilde{k}$\\
    compute $\|w_j^{(m)}\|_2^2$ for all $j,m$ with $w_j^{(m)}$ given by Eq. \eqref{weight-partial-representation-text}\\
    update $\beta_{jm}$ for all $j,m$ according to Eq. \eqref{kernel-mixture-update}\\
    }
\end{algorithm2e}


An alternative strategy would be to directly optimize
\eqref{thm:soft-complete-dual} (without the need of a two-step wrapper approach).
Such an approach has been presented in \citet{sun2010multiple} in the context of $\ell_p$-norm MKL.

\vspace{-0.1cm}
\subsection{Convergence Analysis of the Algorithm}
\vspace{-0.1cm}

The theorem below, which is proved in  Supplemental Material \ref{supp:lemm_bcd}, shows convergence of Algorithm \ref{algorithm:wrapper}.
The core idea is to view Algorithm \ref{algorithm:wrapper} as an example of the classical block coordinate descent (BCD) method,
convergence of which is well understood.

\begin{theorem}[\sc Convergence analysis of Algorithm \ref{algorithm:wrapper}]\label{thm:clmkl_convergence}
  Assume that
	\parskip0pt
  \begin{enumerate}[({B}1)]\setlength\itemsep{0em}
	  \parskip0pt
    \item the feature map $\phi_m(x)$ is of finite dimension, i.e, $\phi_m(x)\in\mathbb{R}^{e_m},e_m<\infty,\forall m\in\nbb_M$
    \item the loss function $\ell$ is convex, continuous w.r.t. the first argument and $\ell(0,y)<\infty,\forall y\in\mathcal{Y}$
    \item any iterate $\beta_{jm}$ traversed by Algorithm \ref{algorithm:wrapper} has $\beta_{jm}>0$
    \item the SVM computation in line 4 of Algorithm \ref{algorithm:wrapper} is solved exactly in each iteration.
  \end{enumerate}
  Then, any limit point of the sequence traversed by Algorithm \ref{algorithm:wrapper} minimizes the problem \eqref{soft-primal}.
\end{theorem}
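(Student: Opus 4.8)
The plan is to recognize Algorithm~\ref{algorithm:wrapper} as an \emph{exact} two-block coordinate descent method and then invoke the classical convergence theory of block coordinate descent for convex, possibly nonsmooth, objectives, e.g.\ in the form of Tseng, or of Grippo and Sciandrone. To this end I would first eliminate the slack variables $t_i$ from \eqref{soft-primal} using its equality constraints, obtaining the equivalent problem of minimizing the jointly convex objective $F(w,b,\vbeta):=\sum_{j\in\nbb_l}\sum_{m\in\nbb_M}\frac{\|w_j^{(m)}\|_2^2}{2\beta_{jm}}+C\sum_{i\in\nbb_n}\ell(f(x_i),y_i)$, with $f$ the decision function \eqref{decision-function}, over the block $(w,b)$ --- which by B1 ranges over a finite-dimensional Euclidean space with no further constraint --- and the block $\vbeta=(\beta_{jm})$, whose sub-vectors $\beta_j:=(\beta_{j1},\ldots,\beta_{jM})$ range over the compact convex set $\mathcal{B}=\{u\in\rbb^M:u\geq0,\ \sum_{m\in\nbb_M}u_m^p\leq1\}$. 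With the convention $0/0=0$, $F$ is proper, convex and lower semicontinuous, and its coupling term $\sum_{j,m}\|w_j^{(m)}\|_2^2/(2\beta_{jm})$ is continuously differentiable on the open region $\{\beta_{jm}>0\ \forall j,m\}$, which by B3 contains every iterate. The key observation is that one sweep of Algorithm~\ref{algorithm:wrapper} is exact alternating minimization of $F$: the SVM step in Line~4 together with the representation \eqref{weight-partial-representation-text} returns, by B4, the exact minimizer of $F(\cdot,\cdot,\vbeta)$ for the current $\vbeta$, while Line~6 returns, by Proposition~\ref{prop:mixture-update}, the exact minimizer of $F(w,b,\cdot)$ over $\prod_{j\in\nbb_l}\mathcal{B}$ for the current $(w,b)$.

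Next I would check the hypotheses of the block coordinate descent convergence theorem. The $\vbeta$-subproblem attains its minimum in closed form \eqref{kernel-mixture-update}; for the $(w,b)$-subproblem, for each fixed feasible $\vbeta$ the quadratic term makes $F(\cdot,\cdot,\vbeta)$ coercive in $w$ and B2 (convexity of $\ell$ with $\ell(0,y)<\infty$) controls $b$, so a minimizer exists. Since there are only two blocks, one does not need the block-minimizers to be unique --- which matters because $F(\cdot,\cdot,\vbeta)$ need not be strictly convex in $b$. Monotonicity of $F$ along the iterates together with $\beta_{jm}\leq1$ (which follows from $\beta_{jm}^p\leq\sum_k\beta_{jk}^p\leq1$) gives $\sum_{j,m}\|w_j^{(m)}\|_2^2\leq2F(w^{(0)},b^{(0)},\vbeta^{(0)})$, supplying the compactness needed to run the argument along a convergent subsequence. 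The standard coordinate descent analysis then shows that any limit point $(w^\star,b^\star,\vbeta^\star)$ is a coordinate-wise minimum of $F$: $(w^\star,b^\star)\in\arg\min F(\cdot,\cdot,\vbeta^\star)$ and $\vbeta^\star\in\arg\min F(w^\star,b^\star,\cdot)$. Because the only nonsmooth constituents of $F$ --- the loss term, a function of the block $(w,b)$ alone, and the indicator of $\prod_{j\in\nbb_l}\mathcal{B}$, a function of $\vbeta$ alone --- are separable across the two blocks, while the coupling term is differentiable at $(w^\star,b^\star,\vbeta^\star)$ by B3, the one-sided directional derivative of $F$ at that point splits additively into a $(w,b)$-part and a $\vbeta$-part, each nonnegative by coordinate-wise optimality. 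Convexity of $F$ then upgrades nonnegativity of all feasible directional derivatives to global optimality, so every limit point minimizes \eqref{soft-primal}.

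I expect the main obstacle to be the careful treatment of the boundary $\beta_{jm}=0$, where $F$ is only lower semicontinuous and the differentiability of the coupling term --- on which the additive splitting of directional derivatives rests --- breaks down. Assumption B3 is precisely what keeps the iterates inside the good region, but since it does not keep the limit $\vbeta^\star$ uniformly away from $0$, one must verify separately that a limit point still lies in the effective domain of $F$; this follows because boundedness of $\|w_j^{(m)}\|_2^2/\beta_{jm}$ along the convergent subsequence forces $w_j^{(m)}\to0$ whenever $\beta_{jm}\to0$, so no term blows up in the limit. The remaining points --- passing the exact-minimization property of Lines~4 and 6 to the limit using continuity of $F$ on the good region, and confirming that B1--B2 make the SVM subproblem's minimum genuinely attained --- are where the hypotheses are used but pose no conceptual difficulty.
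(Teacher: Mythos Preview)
Your approach is essentially the paper's: both cast Algorithm~\ref{algorithm:wrapper} as exact two-block coordinate descent and invoke Tseng's convergence theory via the decomposition $F=f_0+f_1+f_2$, with $f_0$ the smooth coupling term $\sum_{j,m}\|w_j^{(m)}\|_2^2/(2\beta_{jm})$ (continuously differentiable on $\{\beta>0\}$, guaranteed by B3) and $f_1,f_2$ the block-separable nonsmooth pieces (the loss and the $\ell_p$-ball indicator). The paper simply checks the hypotheses of its Lemma~A.3 (a restatement of Tseng's Lemma~3.1 and Theorem~4.1) and silently drops the offset $b$, whereas you retain $b$ and unpack the directional-derivative argument by hand---including a more explicit treatment of the boundary $\beta_{jm}\to0$ at a limit point, which the paper leaves implicit in the cited lemma.
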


\vspace{-0.1cm}
\subsection{Runtime Complexity Analysis}
\vspace{-0.1cm}

At each iteration of the training stage, we need $O(n^2Ml)$ operations to calculate the kernel~\eqref{kernel-general-loss}, $O(n^2n_s)$ operations to solve a standard SVM problem, $O(Mln^2_s)$ operations to calculate the norm according to the representation \eqref{weight-partial-representation-text} and $O(Ml)$ operations to update the kernel weights. Thus, the computational cost at each iteration is $O(n^2Ml)$. The time complexity at the test stage is $O(n_tn_sMl)$. Here, $n_s$ and $n_t$ are the number of support vectors and test points, respectively.

\vspace{-0.1cm}
\section{Generalization Error Bounds\label{sec:bounds}}
\vspace{-0.1cm}

In this section we present generalization error bounds for our approach.
We give a purely data-dependent bound on the generalization error, which is obtained using Rademacher complexity theory~\citep{bartlett2002rademacher}.
To start with, our basic strategy is to plug the optimal $\beta_{jm}$ established in Eq.~\eqref{kernel-mixture-update} into \eqref{soft-primal},
so as to equivalently rewrite \eqref{soft-primal} as a block-norm regularized problem as follows:
\begin{equation}\label{convx-local-MKL-equivalent}
  \min_{w,b}\;\frac{1}{2}\sum_{j\in\nbb_l}\Big[\sum_{m\in\nbb_M}\|w_j^{(m)}\|_2^{\frac{2p}{p+1}}\Big]^{\frac{p+1}{p}}\!+\thinspace\!C\sum_{i\in\nbb_n}\ell\Big(\sum_{j\in\nbb_l}c_j(x_i)\big[\sum_{m\in\nbb_M}\langle w_j^{(m)},\!\phi_m(x_i)\rangle\big]+b,y_i\Big).
\end{equation}
Solving \eqref{convx-local-MKL-equivalent} corresponds to empirical risk minimization in the following hypothesis space:
\begin{multline*}
  H_{p,D}:=H_{p,D,M}=\bigg\{f_w:x\to\sum_{j\in\nbb_l}c_j(x_i)\big[\sum_{m\in\nbb_M}\langle w_j^{(m)},\phi_m(x_i)\rangle\big]: \sum_{j\in\nbb_l}\|w_j\|^2_{2,\frac{2p}{p+1}}\leq D\bigg\}.
\end{multline*}

The following theorem establishes the Rademacher complexity bounds for the function class $H_{p,D}$, from which we derive generalization error bounds for CLMKL in Theorem \ref{thm:genBound}. The proofs of the Theorems \ref{thm:empirical-rademacher-bound}, \ref{thm:genBound} are given in Supplemental Material \ref{supp:proof_rad}. 
\begin{definition}
  For a fixed sample $S=(x_1,\ldots,x_n)$, the empirical Rademacher complexity of a hypothesis space $H$ is defined as
  $$\hat{R}_n(H):=\ebb_{\bm{\sigma}}\sup_{f\in H}\frac{1}{n}\sum_{i\in\nbb_n}\sigma_if(x_i),$$
	where the expectation is taken w.r.t. $\bm{\sigma}=(\sigma_1,\ldots,\sigma_n)^\top$ with $\sigma_i,i\in\nbb_n$, being a sequence of independent uniform $\{\pm1\}$-valued random variables.
\end{definition}

\begin{theorem}[\sc CLMKL Rademacher complexity bounds\label{thm:empirical-rademacher-bound}]
  The empirical Rademacher complexity of $H_{p,D}$ can be controlled by
  \begin{equation}\label{empirical-rademacher-bound}
    \hat{R}_n(H_{p,D})\leq\frac{\sqrt{D}}{n}\inf_{2\leq t\leq\frac{2p}{p-1}}\Bigg(t\sum_{j\in\nbb_l}\Big\|\Big(\sum_{i\in \nbb_n}c_j^2(x_i)k_m(x_i,x_i)\Big)_{m=1}^M\Big\|_{\frac{t}{2}}\Bigg)^{1/2}.
  \end{equation}
  If, additionally, $k_m(x,x)\leq B$ for any $x\in\mathcal{X}$ and any $m\in\nbb_M$, then we have
  $$
  \hat{R}_{n}(H_{p,D})\leq\frac{\sqrt{DB}}{n}\inf_{2\leq t\leq\frac{2p}{p-1}}\Big(tM^{\frac{2}{t}}\sum_{j\in\nbb_l}\sum_{i\in\nbb_n}c_j^2(x_i)\Big)^{1/2}.
  $$
\end{theorem}

\begin{theorem}[\sc CLMKL Generalization Error Bounds]\label{thm:genBound}
  Assume that $k_m(x,x)\leq B,\forall m\in\nbb_M,x\in\mathcal{X}$. Suppose the loss function $\ell$ is $L$-Lipschitz and bounded by $B_{\ell}$. 
  Then, the following inequality holds with probability larger than $1-\delta$ over samples of size $n$ for all classifiers $h\in H_{p,D}$:
	\vspace{-0.1cm}
$$\mathcal{E}_{\ell}(h)\leq\mathcal{E}_{\ell,\mathbf{z}}(h)+B_{\ell}\sqrt{\frac{\log(2/\delta)}{2n}}+2\frac{\sqrt{DB}}{n}\inf_{2\leq t\leq\frac{2p}{p-1}}\Big(tM^{\frac{2}{t}}\big[\sum_{j\in\nbb_l}\sum_{i\in\nbb_n}c^2_j(x_i)\big]\Big)^{1/2},
  \vspace{-0.1cm}
$$
  where $\mathcal{E}_{\ell}(h):=\ebb[\ell(h(x),y)]$ and $\mathcal{E}_{\ell,\mathbf{z}}(h):=\frac{1}{n}\sum_{i\in\nbb_n}\ell(h(x_i),y_i)$.
\end{theorem}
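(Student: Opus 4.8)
The plan is to bound the empirical Rademacher complexity of the hypothesis class $H_{p,D}$ and then feed this into the standard Rademacher-based generalization bound for Lipschitz losses. First I would recall the standard result: with probability at least $1-\delta$, for all $h$ in a class with empirical Rademacher complexity $\mathfrak{R}_n(H)$ and a loss $\ell$ that is $L$-Lipschitz and bounded by $B_\ell$, one has $\mathcal{E}_\ell(h)\le\mathcal{E}_{\ell,\mathbf z}(h)+2L\,\mathfrak{R}_n(H)+B_\ell\sqrt{\log(2/\delta)/(2n)}$ (contraction lemma plus McDiarmid). So the whole game reduces to showing $\mathfrak{R}_n(H_{p,D})\le\frac{\sqrt{DB}}{Ln}\inf_{2\le t\le\frac{2p}{p-1}}\big(tM^{2/t}\sum_{j}\sum_i c_j^2(x_i)\big)^{1/2}$, i.e.\ absorbing the factor $L$ — actually it is cleaner to bound $\mathfrak{R}_n$ directly and note the stated bound already has the $2$ out front with $L$ cancelled against the $1/L$; I would just track constants carefully.

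The core computation: write $\mathfrak{R}_n(H_{p,D})=\frac1n\,\ebb_\sigma\sup_{w:\sum_j\|w_j\|_{2,2p/(p+1)}^2\le D}\sum_i\sigma_i\sum_j c_j(x_i)\sum_m\langle w_j^{(m)},\phi_m(x_i)\rangle$. For fixed $j,m$ the inner sum is $\langle w_j^{(m)},\sum_i\sigma_i c_j(x_i)\phi_m(x_i)\rangle$, so by the same Fenchel/dual-norm duality used in the Dualization section, the supremum over $w$ subject to $\sum_j\|w_j\|_{2,2p/(p+1)}^2\le D$ equals $\sqrt D$ times $\big(\sum_j\|(\|\sum_i\sigma_i c_j(x_i)\phi_m(x_i)\|_2)_{m=1}^M\|_{2p/(p-1)}^2\big)^{1/2}$ — here I use that the norm dual to the block norm $\big(\sum_j\|w_j\|_{2,2p/(p+1)}^2\big)^{1/2}$ is the block norm with $\ell_2$ over $j$ and $\ell_{2p/(p-1)}$ over $m$, exactly as established earlier. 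Then Jensen's inequality moves $\ebb_\sigma$ inside the square root. At this point I would invoke the standard $\ell_q$-to-$\ell_2$ interpolation trick from $\ell_p$-norm MKL analysis (Kloft--Blanchard): for any $2\le t\le 2p/(p-1)$, $\|v\|_{2p/(p-1)}\le M^{1/2-1/t}\cdot$ (something), or more directly bound the $\ell_{2p/(p-1)}$-norm over $m$ by $M^{(p-1)/(2p)-1/t}$ times the $\ell_t$-norm and then use $\|\cdot\|_t\le M^{1/t-1/2}\|\cdot\|_2$-type steps; the upshot is a factor $M^{2/t}$ and a factor $t$ coming from the moment bound $\ebb_\sigma\|\sum_i\sigma_i c_j(x_i)\phi_m(x_i)\|_2^t)^{1/t}\lesssim\sqrt t\,(\sum_i c_j^2(x_i)k_m(x_i,x_i))^{1/2}$ (Khintchine/Kahane). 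Using $k_m(x_i,x_i)\le B$ collapses the $m$-sum and the $j$-sum into $\sum_j\sum_i c_j^2(x_i)$, and taking the infimum over admissible $t$ gives the stated expression.

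The main obstacle I anticipate is getting the $M^{2/t}$ and the linear-in-$t$ dependence out with the right constants — this is the delicate interpolation step, and one must be careful that the Khintchine-type constant is genuinely $O(\sqrt t)$ (so that after squaring it is $O(t)$) and that the exponent bookkeeping between $\|\cdot\|_{2p/(p-1)}$, $\|\cdot\|_t$, and $\|\cdot\|_2$ over the $M$ coordinates produces exactly $M^{2/t}$ under the square root. The condition $t\le 2p/(p-1)$ is precisely what is needed so that passing from the $\ell_{2p/(p-1)}$-block-norm up to an $\ell_t$-block-norm is an inequality in the correct direction (monotonicity of $\ell_r$-norms in $r$), while $t\ge 2$ is needed for the vector-valued Khintchine inequality. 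Everything else — Jensen, the duality identity, bounding $k_m(x_i,x_i)$ by $B$, and assembling the final high-probability statement — is routine. I would also double-check that the $c_j$ appearing inside (as fixed data-dependent weights, not optimization variables) pass through the Rademacher symmetrization untouched, which they do since they depend only on $x_i$.
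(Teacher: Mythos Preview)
Your plan is correct and mirrors the paper's proof: bound the empirical Rademacher complexity of $H_{p,D}$ via block-structured H\"older (equivalently, the dual-norm identity you describe) together with Cauchy--Schwarz over $j$, then Jensen and the Khintchine--Kahane inequality, and finally plug the resulting bound into the standard Bartlett--Mendelson generalization inequality. One small correction to the step you flagged as delicate: the passage from the $\ell_{2p/(p-1)}$-norm over $m$ to the $\ell_t$-norm for $t\le 2p/(p-1)$ requires no $M$ factor at all (it is pure monotonicity of $\ell_r$-norms in $r$); the factor $M^{2/t}$ appears only at the very end, when the uniform kernel bound $k_m(x_i,x_i)\le B$ collapses $\bigl(\sum_{m\in\nbb_M}\bigl(\sum_{i\in\nbb_n} c_j^2(x_i)k_m(x_i,x_i)\bigr)^{t/2}\bigr)^{2/t}$ to $B\,M^{2/t}\sum_{i\in\nbb_n} c_j^2(x_i)$.
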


The above bound enjoys	a mild dependence on the number of kernels.
One can show (cf. Supplemental Material \ref{supp:proof_rad}) that the dependence is $O(\log M)$ for $p\leq(\log M-1)^{-1}\log M$ and $O(M^{\frac{p-1}{2p}})$ otherwise.
In particular, the dependence is logarithmically for $p=1$ (sparsity-inducing CLMKL).
These dependencies recover the best known results for global MKL algorithms in \citet{cortes2010generalization,KloBla11,kloft2011lp}.

The bounds of Theorem \ref{thm:empirical-rademacher-bound} exhibit a strong dependence on the likelihood functions, which inspires us to derive a new algorithmic strategy as follows.
Consider the special case where $c_j(x)$ takes values in $\{0,1\}$ (hard cluster membership assignment),
and thus the term determining the bound has $\sum_{j\in\nbb_l}\sum_{m\in\nbb_M}c_j^2(x_i)=n$.
On the other hand, if $c_j(x)\equiv \frac{1}{l},j\in\nbb_l$ (uniform cluster membership assignment), we have the favorable term $\sum_{j\in\nbb}\sum_{i\in\nbb_n}c^2_j(x_i)=\frac{n}{l}$.
This motivates us to introduce a parameter $\tau$ controlling the complexity of the bound by considering likelihood functions of the form
\begin{equation}\label{clustering-function}
  c_j(x)\propto\exp(-\tau\text{dist}^2(x,S_j)),
\end{equation}
where $\text{dist}(x,S_j)$ is the distance between the example $x$ and the cluster $S_j$.
By letting $\tau=0$ and $\tau=\infty$, we recover uniform and hard cluster assignments, respectively.
Intermediate values of $\tau$ correspond to more balanced cluster assignments.
As illustrated by Theorem \ref{thm:empirical-rademacher-bound}, by tuning $\tau$ we optimally adjust the resulting models' complexities.


\section{Empirical Analysis and Applications\label{sec:empirical}}

\vspace{-0.1cm}
\subsection{Experimental Setup}
\vspace{-0.1cm}
We implement the proposed convex localized MKL (CLMKL) algorithm in MATLAB and solve the involved canonical SVM problem with LIBSVM \citep{chang2011libsvm}.
The clusters $\{S_1,\ldots,S_l\}$ are computed through kernel k-means \citep[e.g.,][]{dhillon2004kernel},
but in principle other clustering methods (including convex ones such as \citet{hocking2011clusterpath}) could be used.
To further diminish k-means' potential fluctuations (which are due to random initialization of the cluster means), we repeat kernel k-means $t$ times, and choose the one with minimal clustering error (the summation of the squared distance between the examples and the associated nearest cluster) as the final partition $\{S_1,\ldots,S_l\}$.
To tune the parameter $\tau$ in \eqref{clustering-function} in a uniform manner, we introduce the notation
$$\np(\tau):=\frac{1}{nl}\sum_{i\in\nbb_n}\sum_{j\in\nbb_l}\frac{\exp(-\tau\text{dist}^2(x_i,S_j))}{\max_{\tilde{j}\in\nbb_l}\exp(-\tau\text{dist}^2(x_i,S_{\tilde{j}}))}$$
to measure the average evenness (or average excess over hard partition) of the likelihood function.
It can be checked that $\np(\tau)$ is a strictly decreasing function of $\tau$, taking value $1$ at the point $\tau=0$ and $l^{-1}$ at the point $\tau=\infty$.
Instead of tuning the parameter $\tau$ directly, we propose to tune the average excess/evenness over a subset in $[l^{-1},1]$.
The associated parameter $\tau$ are then fixed by the standard binary search algorithm.

We compare the performance attained by the proposed CLMKL to regular localized MKL (LMKL) \citep{gonen2008localized},  localized MKL based on hard clustering (HLMKL) \citep{lei2015theory},
the SVM using a uniform kernel combination (UNIF) \citep{Cor09}, and $\ell_p$-norm MKL \citep{kloft2011lp},
which includes classical MKL \citep{lanckriet2004learning} as a special case. We optimize $\ell_p$-norm MKL and CLMKL until the relative duality gap drops below $0.001$.
The calculation of the gradients in LMKL \citep{gonen2008localized} requires $O(n^2M^2d)$ operations,
which scales poorly, and the definition of the gating model requires the information of primitive features,
which is not available for the biological applications studied below, all of which involve string kernels.
In Supplemental Material \ref{supp:fastGoenen}, we therefore give a fast and general formulation of LMKL,
which requires only $O(n^2M)$ operations per iteration.
Our implementation of which is available from the following webpage,
together with our CLMKL implementation and scripts to reproduce
the experiments:

{\centering\small\url{https://www.dropbox.com/sh/hkkfa0ghxzuig03/AADRdtSSdUSm8hfVbsdjcRqva?dl=0}}.

In the following we report detailed results for various real-world experiments. Further details are shown
in Supplemental Material \ref{supp:exp_details}.

\vspace{-0.1cm}
\subsection{Splice Site Recognition}
\vspace{-0.1cm}

Our first experiment aims at detecting splice sites in the organism \emph{Caenorhabditis elegans}, which is an important task in computational gene finding
as splice sites are located on the DNA strang right at the boundary of exons (which code for proteins) and introns (which do not).
We experiment on the \texttt{mkl-splice} data set, which we download from {\small\url{http://mldata.org/repository/data/viewslug/mkl-splice/}}.
It includes 1000 splice site instances and 20 weighted-degree kernels with degrees ranging from 1 to 20 \citep{BenOngSonSchRae08}.
The experimental setup for this experiment is as follows.
We create random splits of this dataset into training set, validation set and test set, with size of training set traversing over the set $\{50,100,200,300,\ldots,800\}$.
We apply kernel-kmeans with uniform kernel to generate a partition with $l=3$ clusters for both CLMKL and HLMKL, and use this kernel to define the gating model in LMKL.
To be consistent with previous studies, we use the area under the ROC curve (AUC) as an evaluation criterion.
We tune the SVM regularization parameter from $10^{\{-1,-0.5,\ldots,2\}}$, and the average evenness over the interval $[0.4,0.8]$ with eight linearly equally spaced points, based on the AUCs on the validation set.
All the base kernel matrices are multiplicatively normalized before training.
We repeat the experiment $50$ times, and report mean AUCs on the test set as well as standard deviation.
Figure \ref{fig:splice_TSS} (a) shows the results as a function of the training set size $n$.
\begin{figure}[!t]
\centering
  \subfigure[\texttt{Splice}]{\includegraphics[width=0.4\textwidth]{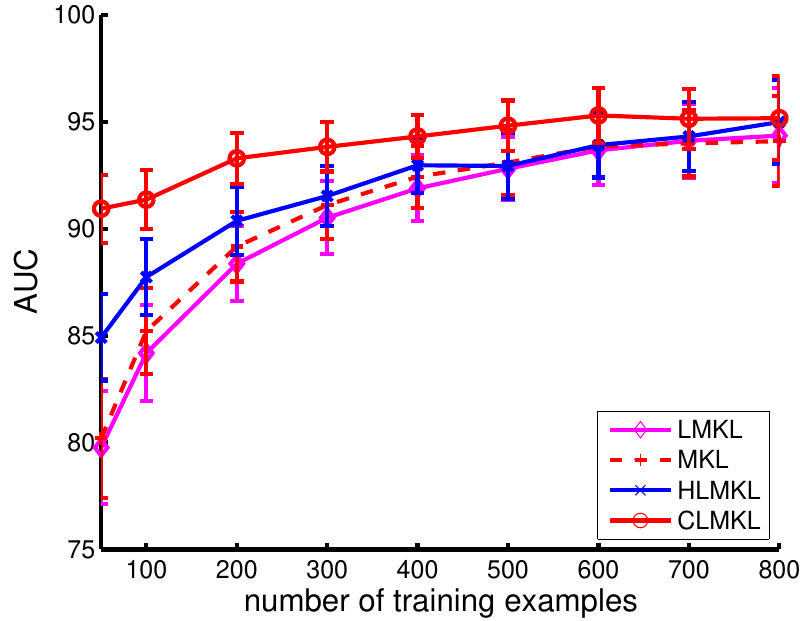}}
  \subfigure[\texttt{TSS}]{\includegraphics[width=0.4\textwidth]{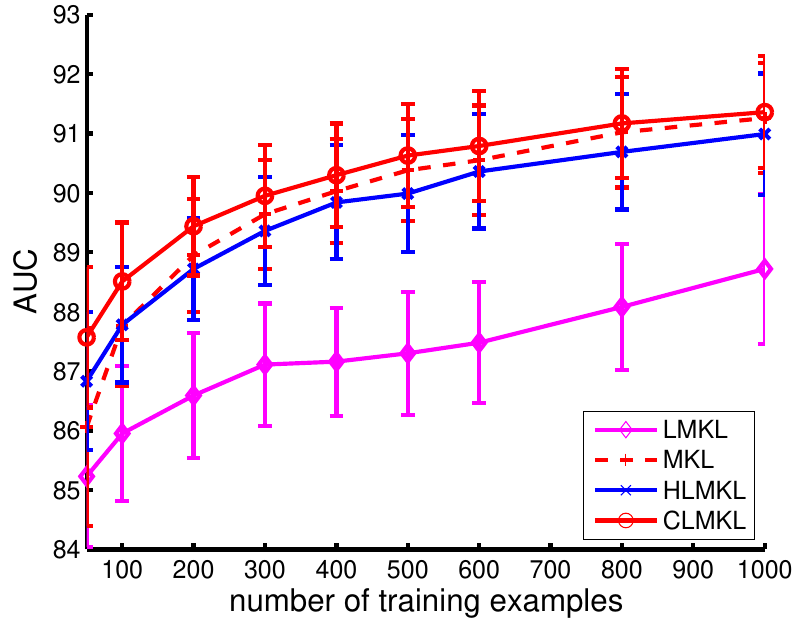}}
	\vspace{-0.3cm}
\caption{
Results of the gene finding experiments: splice site detection (left) and transcription start site detection (right). To clean the presentation, results for UNIF are not given here. The parameter $p$ for CLMKL, HLMKL and MKL is set as $1$ here.}
\label{fig:splice_TSS}
\vspace{-0.3cm}
\end{figure}

We observe that CLMKL achieves, for all $n$, a significant gain over all baselines.
This improvement is especially strong for small $n$. For $n=50$, CLMKL attains $90.9\%$ accuracy,
while the best baseline only achieves $85.4\%$, improving by $5.5\%$.  Detailed results with standard deviation are reported in Table \ref{tab:result-splice}. A hypothetical explanation of the improvement from CLMKL is that splice sites are characterized by nucleotide sequences---so-called \emph{motifs}---the length of which may differ from site to site \citep{sonnenburg2008poims}.
The 20 employed kernels count matching subsequences of length 1 to 20, respectively.
For sites characterized by smaller motifs, low-degree WD-kernels are thus more effective than high-degree ones, and vice versa for sites containing longer motifs.

\begin{table*}[!h]
\scriptsize
\setlength{\tabcolsep}{1pt}
\centering
  \begin{tabular}{*{10}{|c}|}\hline
  &$50$&$100$&$200$&$300$&$400$&$500$&$600$&$700$&$800$\\ \hline
  UNIF&$79.5\!\pm\!2.8\bullet$&$ 84.2\!\pm\!2.2\bullet$&$ 88.0\!\pm\!1.7\bullet$&$ 90.0\!\pm\!1.7\bullet$&$ 91.6\!\pm\!1.5\bullet$&$ 92.4\!\pm\!1.5\bullet$&$ 93.3\!\pm\!1.7\bullet$&$ 93.6\!\pm\!1.7\bullet$&$ 93.8\!\pm\!2.3\bullet$\\ \hline
  LMKL&$79.8\!\pm\!2.7\bullet$& $84.2\!\pm\!2.3\bullet$& $88.4\!\pm\!1.7\bullet$& $90.5\!\pm\!1.7\bullet$& $91.9\!\pm\!1.5\bullet$& $92.8\!\pm\!1.5\bullet$& $93.7\!\pm\!1.6\bullet$& $94.1\!\pm\!1.7\bullet$& $94.3\!\pm\!2.2\bullet$\\ \hline
  MKL,p=1&$80.2\!\pm\!2.8\bullet$&$ 85.2\!\pm\!2.0\bullet$&$ 89.2\!\pm\!1.6\bullet$&$ 91.1\!\pm\!1.6\bullet$&$ 92.5\!\pm\!1.5\bullet$&$ 93.1\!\pm\!1.5\bullet$&$ 93.9\!\pm\!1.4\bullet$&$ 94.0\!\pm\!1.6\bullet$&$ 94.2\!\pm\!2.1\bullet$\\\hline
  MKL,p=2&$79.6\!\pm\!2.8\bullet$&$ 84.3\!\pm\!2.2\bullet$&$ 88.3\!\pm\!1.7\bullet$&$ 90.4\!\pm\!1.6\bullet$&$ 91.8\!\pm\!1.5\bullet$&$ 92.5\!\pm\!1.5\bullet$&$ 93.4\!\pm\!1.6\bullet$&$ 93.6\!\pm\!1.6\bullet$&$ 93.8\!\pm\!2.2\bullet$\\\hline
  MKL,p=1.33&$79.7\!\pm\!2.9\bullet$&$ 84.6\!\pm\!2.1\bullet$&$ 88.6\!\pm\!1.7\bullet$&$ 90.6\!\pm\!1.6\bullet$&$ 92.0\!\pm\!1.5\bullet$&$ 92.7\!\pm\!1.5\bullet$&$ 93.5\!\pm\!1.5\bullet$&$ 93.7\!\pm\!1.6\bullet$&$ 93.8\!\pm\!2.1\bullet$\\\hline
  HLMKL,p=1&$84.9\!\pm\!2.0\bullet$& $87.7\!\pm\!1.8\bullet$& $90.4\!\pm\!1.6\bullet$& $91.5\!\pm\!1.4\bullet$& $93.0\!\pm\!1.3\bullet$& $92.9\!\pm\!1.6\bullet$& $93.9\!\pm\!1.5\bullet$& $94.3\!\pm\!1.6\bullet$& $95.0\!\pm\!2.0\bullet$\\ \hline
  HLMKL,p=2&$84.9\!\pm\!2.0\bullet$& $87.0\!\pm\!1.7\bullet$& $90.4\!\pm\!1.4\bullet$& $91.1\!\pm\!1.6\bullet$& $92.6\!\pm\!1.4\bullet$& $93.5\!\pm\!1.6\bullet$& $94.7\!\pm\!1.4\bullet$& $94.6\!\pm\!1.4\bullet$& $94.4\!\pm\!2.2\bullet$\\ \hline
  HLMKL,p=1.33&$85.4\!\pm\!1.9\bullet$& $88.5\!\pm\!1.7\bullet$& $90.1\!\pm\!1.6\bullet$& $91.7\!\pm\!1.4\bullet$& $92.7\!\pm\!1.2\bullet$& $93.4\!\pm\!1.6\bullet$& $94.6\!\pm\!1.5\bullet$& $94.4\!\pm\!1.7\bullet$& $94.4\!\pm\!2.1\bullet$\\ \hline
  CLMKL,p=1&$90.9\!\pm\!1.6$& $91.3\!\pm\!1.4\bullet$& $93.3\!\pm\!1.2\bullet$& $93.8\!\pm\!1.2\bullet$& $94.3\!\pm\!1.0\bullet$& $94.8\!\pm\!1.2$& $95.3\!\pm\!1.3\bullet$& $95.1\!\pm\!1.4\bullet$& $95.2\!\pm\!2.0\bullet$\\\hline
  CLMKL,p=2&$90.5\!\pm\!1.6\bullet$& $92.3\!\pm\!1.2\bullet$& $93.0\!\pm\!1.2\bullet$& $94.0\!\pm\!1.2$& $94.4\!\pm\!1.1\bullet$& $94.7\!\pm\!1.2\bullet$& $95.4\!\pm\!1.4\bullet$& $95.3\!\pm\!1.5\bullet$& $95.6\!\pm\!1.9\bullet$\\\hline
  CLMKL,p=1.33&$90.9\!\pm\!1.5$& $90.1\!\pm\!1.3$& $92.7\!\pm\!1.2$& $94.1\!\pm\!1.2$& $94.8\!\pm\!1.1$& $94.9\!\pm\!1.1$& $95.6\!\pm\!1.2$& $95.4\!\pm\!1.5$& $95.4\!\pm\!1.9$\\\hline
  \end{tabular}
    \caption{Performances achieved by LMKL, UNIF, regular $\ell_p$ MKL, HLMKL, and CLMKL on Splice Dataset. $\bullet$ indicates that CLMKL with $p=1.33$ is significantly better than the compared method (paired t-tests at $95\%$ significance level).\label{tab:result-splice}}
\end{table*}

\vspace{-0.1cm}
\subsection{Transcription Start Site Detection}
\vspace{-0.1cm}

Our next experiment aims at detecting transcription start sites (TSS) of RNA Polymerase II binding genes in genomic DNA sequences. We experiment on the \texttt{TSS} data
set, which we downloaded from {\small\url{http://mldata.org/repository/data/viewslug/tss/}}.  This data set, which is included in the larger study of
\cite{SonZieRae06}, comes with $5$ kernels. 
The SVM based on the uniform combination of these $5$ kernels was found to have the highest overall performance among $19$ promoter prediction programs \citep{AbeelPS09}.
It therefore constitutes a strong baseline.
To be consistent with previous studies \citep{AbeelPS09,kloft2011lpb,SonZieRae06}, we use the area under the ROC curve (AUC) as an evaluation criterion.
We consider the same experimental setup as in the splice detection experiment.
The gating function and the partition are computed with the TSS kernel, which carries most of the discriminative information \citep{SonZieRae06}.
All kernel matrices were normalized with respect to their trace, prior to the experiment.

\begin{table*}[!h]
\scriptsize
\setlength{\tabcolsep}{1pt}
\centering
  \begin{tabular}{*{10}{|c}|}\hline
  &$50$&$100$&$200$&$300$&$400$&$500$&$600$&$800$&$1000$\\ \hline
  UNIF&$83.9\!\pm\!2.4\bullet$& $86.2\!\pm\!1.3\bullet$& $87.6\!\pm\!1.0\bullet$& $88.4\!\pm\!0.9\bullet$& $88.7\!\pm\!0.9\bullet$& $89.1\!\pm\!0.9\bullet$& $89.2\!\pm\!1.0\bullet$& $89.6\!\pm\!1.1\bullet$& $89.8\!\pm\!1.1\bullet$\\\hline
  LMKL&$85.2\!\pm\!1.2\bullet$& $85.9\!\pm\!1.1\bullet$& $86.6\!\pm\!1.1\bullet$& $87.1\!\pm\!1.0\bullet$& $87.2\!\pm\!0.9\bullet$& $87.3\!\pm\!1.0\bullet$& $87.5\!\pm\!1.0\bullet$& $88.1\!\pm\!1.1\bullet$& $88.7\!\pm\!1.3\bullet$\\\hline
  MKL,p=1&$86.0\!\pm\!1.7\bullet$& $87.7\!\pm\!1.0\bullet$& $88.9\!\pm\!0.9\bullet$& $89.6\!\pm\!0.9\bullet$& $90.0\!\pm\!0.9\bullet$& $90.3\!\pm\!0.9\bullet$& $90.5\!\pm\!0.9$& $91.0\!\pm\!0.9$& $91.2\!\pm\!0.9$\\\hline
  MKL,p=2&$85.1\!\pm\!2.0\bullet$& $86.9\!\pm\!1.1\bullet$& $88.1\!\pm\!0.9\bullet$& $88.8\!\pm\!0.9\bullet$& $89.2\!\pm\!0.9\bullet$& $89.6\!\pm\!0.9\bullet$& $89.8\!\pm\!1.0\bullet$& $90.3\!\pm\!1.0\bullet$& $90.7\!\pm\!0.9\bullet$\\\hline
  MKL,p=1.33&$85.7\!\pm\!1.8\bullet$& $87.5\!\pm\!1.0\bullet$& $88.7\!\pm\!0.9\bullet$& $89.4\!\pm\!0.9\bullet$& $89.8\!\pm\!0.9\bullet$& $90.2\!\pm\!0.9\bullet$& $90.4\!\pm\!0.9\bullet$& $90.9\!\pm\!0.9\bullet$& $91.2\!\pm\!0.9\bullet$\\\hline
  HLMKL,p=1&$86.8\!\pm\!1.2\bullet$& $87.8\!\pm\!1.0\bullet$& $88.7\!\pm\!0.9\bullet$& $89.4\!\pm\!0.9\bullet$& $89.8\!\pm\!1.0\bullet$& $90.0\!\pm\!1.0\bullet$& $90.4\!\pm\!1.0\bullet$& $90.7\!\pm\!1.0\bullet$& $91.0\!\pm\!1.0\bullet$\\ \hline
  HLMKL,p=2&$86.3\!\pm\!1.4\bullet$& $87.5\!\pm\!1.0\bullet$& $88.5\!\pm\!0.9\bullet$& $89.3\!\pm\!0.9\bullet$& $89.4\!\pm\!0.9\bullet$& $89.7\!\pm\!0.9\bullet$& $89.8\!\pm\!1.0\bullet$& $90.3\!\pm\!1.1\bullet$& $90.5\!\pm\!1.0\bullet$\\ \hline
  HLMKL,p=1.33&$86.5\!\pm\!1.4\bullet$& $87.7\!\pm\!1.1\bullet$& $88.7\!\pm\!0.9\bullet$& $89.3\!\pm\!0.9\bullet$& $89.8\!\pm\!1.0\bullet$& $90.1\!\pm\!0.9\bullet$& $90.2\!\pm\!1.0\bullet$& $90.7\!\pm\!1.0\bullet$& $91.0\!\pm\!0.9\bullet$\\ \hline
  CLMKL,p=1&$87.6\!\pm\!1.2$& $88.5\!\pm\!1.0$& $89.4\!\pm\!0.8$& $90.0\!\pm\!0.9$& $90.3\!\pm\!0.9\bullet$& $90.6\!\pm\!0.9$& $90.8\!\pm\!0.9\bullet$& $91.2\!\pm\!0.9\bullet$& $91.4\!\pm\!0.9$\\\hline
  CLMKL,p=2&$87.3\!\pm\!1.3\bullet$& $88.3\!\pm\!1.0\bullet$& $89.1\!\pm\!0.8\bullet$& $89.6\!\pm\!0.8\bullet$& $89.9\!\pm\!0.9\bullet$& $90.2\!\pm\!0.9\bullet$& $90.3\!\pm\!0.9\bullet$& $90.7\!\pm\!1.0\bullet$& $90.9\!\pm\!0.9\bullet$\\\hline
  CLMKL,p=1.33&$87.6\!\pm\!1.2$& $88.6\!\pm\!0.9$& $89.4\!\pm\!0.8$& $89.9\!\pm\!0.9$& $90.2\!\pm\!0.9$& $90.5\!\pm\!0.9$& $90.6\!\pm\!1.0$& $91.1\!\pm\!1.0$& $91.3\!\pm\!0.9$\\\hline
  \end{tabular}
    \caption{Performances achieved by LMKL, UNIF, regular $\ell_p$ MKL, HLMKL and CLMKL on TSS Dataset. $\bullet$ indicates that CLMKL with $p=1.33$ is significantly better than the compared method (paired t-tests at $95\%$ significance level).
\label{tab:result-tss}}
\end{table*}

Figure \ref{fig:splice_TSS} (b) shows the AUCs on the test data sets as a function of the number of training examples.
We observe that CLMKL attains a consistent improvement over other competing methods. Again, this improvement is most significant when $n$ is small. Detailed results with standard deviation are reported in Table \ref{tab:result-tss}.

\vspace{-0.1cm}
\subsection{Protein Fold Prediction}\label{sec:bio}
\vspace{-0.1cm}

\begin{table*}[!b]
\scriptsize
\setlength{\tabcolsep}{1pt}
\centering
  \begin{tabular}{*{12}{|c}|}\hline
    &\multirow{2}{*}{UNIF}&\multirow{2}{*}{LMKL}&\multicolumn{3}{c|}{MKL}&\multicolumn{3}{c|}{HLMKL}&\multicolumn{3}{c|}{CLMKL}\\ \cline{4-12}
        &&&$p=1$&$p=1.2$&$p=2$&$p=1$&$p=1.2$&$p=2$&$p=1$&$p=1.2$&$p=2$\\\hline
    ACC&$68.4\bullet$&$64.3\bullet$&$68.7\bullet$&$74.2\bullet$&$70.8\bullet$&$72.7\pm1.3\bullet$&$74.6\pm0.6$&$72.4\pm0.8\bullet$&   $71.3\pm0.5\bullet$&$\bm{75.0\pm0.7}$&$71.7\pm0.5\bullet$\\\hline
  \end{tabular}
    \caption{Results of the protein fold prediction experiment. $\bullet$ indicates that CLMKL with $p=1.2$ is significantly better than the compared method (paired t-tests at $95\%$ significance level).
		\label{tab:result-dingshen}}
		\vspace{-0.3cm}
\end{table*}

Protein fold prediction is a key step towards understanding the function of proteins, as the folding class of a protein is closely linked with its function; thus it is crucial for drug design.
We experiment on the protein folding class prediction dataset by \citet{ding2001multi}, which was also used in \citet{campbell2011learning,kloft2011lpb,KloBla11}.
This dataset consists of $27$ fold classes with $311$ proteins used for training and $383$ proteins for testing.
We use exactly the same $12$ kernels as in \citet{campbell2011learning,kloft2011lpb,KloBla11} reflecting different features, such as
van der Waals volume, polarity and hydrophobicity.
We precisely replicate the experimental setup of previous experiments by \citet{campbell2011learning,kloft2011lpb,KloBla11}, which is detailed in Supplementary Material \ref{supp_protein}.
We report the mean prediction accuracies, as well as standard deviations in Table~\ref{tab:result-dingshen}.

The results show that CLMKL surpasses regular $\ell_p$-norm MKL for all values of $p$,
and achieves accuracies up to $0.6\%$ higher than the one reported in \citet{kloft2011lpb}, which is higher than the initially reported accuracies in \citet{campbell2011learning}.
LMKL works poorly in this dataset, possibly because LMKL based on precomputed custom kernels requires to optimize $nM$ additional variables, which may overfit.

%

\vspace{-0.1cm}
\subsection{Visual Image Categorization---UIUC Sports}
\vspace{-0.1cm}

We experiment on the UIUC Sports event dataset \citep{li2007andli} consisting of 1574 images, belonging to 8 image classes of sports activities.
We compute 9 $\chi^2$-kernels based on SIFT features and global color histograms,
which is described in detail in Supplemental Material \ref{supp_uiuc}, where we also give background on the experimental setup.

From the results shown in Table \ref{tab:result-uiuc}, we observe that CLMKL achieves a performance improvement by $0.26\%$ over the $\ell_p$-norm MKL baseline while localized MKL as in \citet{gonen2008localized} underperforms the MKL baseline.

\begin{table*}[!h]
\scriptsize
\setlength{\tabcolsep}{1pt}
\centering
  \begin{tabular}{*{8}{|c}|}\hline
 &    MKL & LMKL & CLMKL &   & MKL & LMKL & CLMKL\\ \hline
ACC & 90.00 & 87.29 & {90.26} & $\Delta $ & 0+\,11=\,0- & 0+\,1=\,10- & 4+\,6=\,1-\\\hline
  \end{tabular}
    \caption{Results of the visual image recognition experiment on the UIUC sports dataset.
		$\Delta$ indicates on how many outer cross validation test splits a method is worse ($n-$), equal ($n=$) or better ($n+$) than MKL.\label{tab:result-uiuc}}
\end{table*}

\vspace{-0.1cm}
\subsection{Execution Time Experiments}\label{supp:execution}
\vspace{-0.1cm}

To demonstrate the efficiency of the proposed implementation, we compare the training time
for UNIF, LMKL, $\ell_p$-norm MKL, HLMKL and CLMKL on the TSS dataset.
We fix the regularization parameter
\begin{wrapfigure}{r}{0.5\textwidth}
\vspace{-0.25cm}
\includegraphics[width=0.46\textwidth]{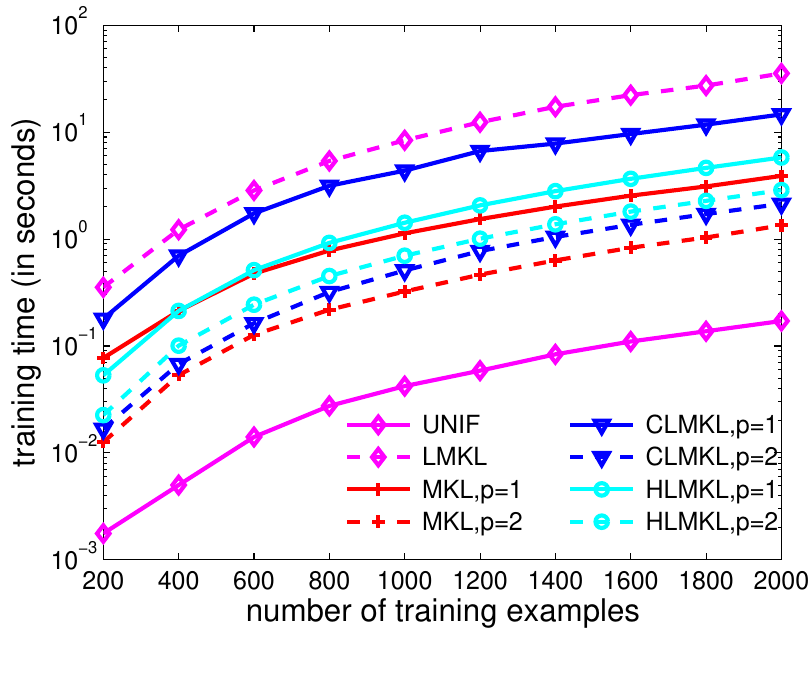}
\vspace{-.6cm}
\end{wrapfigure}
$C=1$. We fix $l=3$ and $\np=0.5$ for CLMKL, and fix $l=3$ for HLMKL. On the image to the right, we plot the training time versus the training set size.
We repeat the experiment $20$ times and report the average training time here.
We optimize CLMKL, HLMKL and MKL until the relative gap is under $10^{-3}$.
The figure implies that CLMKL converges faster than LMKL.
Furthermore, training an $\ell_2$-norm MKL requires significantly less time than training an $\ell_1$-norm MKL,
which is consistent with the fact that the dual problem of $\ell_2$-norm MKL is much smoother than the $\ell_1$-norm counterpart.

\vspace{-0.1cm}
\section{Conclusions\label{sec:conclusions}}
\vspace{-0.1cm}

Localized approaches to multiple kernel learning allow for flexible distribution of kernel weights over the input space,
which can be a great advantage when samples require varying kernel importance.
As we show in this paper, this can be the case in image recognition and several computational biology applications.
However, almost prevalent approaches to localized MKL require solving difficult non-convex optimization problems,
which makes them potentially prone to overfitting as theoretical guarantees such as generalization error bounds are yet unknown.

In this paper, we propose a theoretically grounded approach to localized MKL,
consisting of two subsequent steps: 1. clustering the training instances and 2.
computation of the kernel weights for each cluster through a single convex optimization problem.
For which we derive an efficient optimization algorithm based on Fenchel duality.
Using Rademacher complexity theory, we establish large-deviation inequalities for localized MKL,
showing that the smoothness in the cluster membership assignments crucially controls the generalization error.
The proposed method is well suited for deployment in the domains of computer vision and computational biology. 
For splice site detection, CLMKL achieves up to $5\%$ higher accuracy than its global and non-convex localized counterparts.

Future work could analyze extension of the methodology to semi-supervised learning \citep{gornitz2009active,gornitz2013toward} or using different clustering objectives \citep{vogt2015probabilistic,hocking2011clusterpath} and how to principally include the construction of the data partition into our framework
by constructing partitions that can capture the local variation of prediction importance of different features.


\vspace{-0.1cm}
\section*{Acknowledgments}\label{sec:ack}
\vspace{-0.1cm}
%
YL acknowledges support from the NSFC/RGC Joint Research Scheme [RGC Project No. N\_CityU120/14 and NSFC Project No. 11461161006].
AB acknowledges support from Singapore University of Technology and Design Startup Grant SRIS15105.
MK acknowledges support from the German Research Foundation (DFG) award KL 2698/2-1 and from the Federal Ministry of Science and Education (BMBF) awards 031L0023A and 031B0187B.


\section*{Supplemental Material}
\appendix
\numberwithin{equation}{section}
\numberwithin{theorem}{section}
\numberwithin{figure}{section}
\numberwithin{table}{section}
\renewcommand{\thesection}{{\Alph{section}}}
\renewcommand{\thesubsection}{\Alph{section}.\arabic{subsection}}
\renewcommand{\thesubsubsection}{\Roman{section}.\arabic{subsection}.\arabic{subsubsection}}
\setcounter{secnumdepth}{-1}
\setcounter{secnumdepth}{3}

\section{Lemmata and Proofs}\label{app:proofs}

\subsection{Lemmata Used for Dualization in Section \ref{sec:dual}}\label{app:proofs_dual}
Let $H_1,\ldots,H_M$ be $M$ Hilbert spaces and $p\geq1$. Define the function $g_p(v_1,\ldots,v_M):H_1\times\cdots\times H_M\to\rbb$ by $$g_p(v_1,\ldots,v_M)=\frac{1}{2}\|(v_1,\ldots,v_M)\|^2_{2,p},\quad p\geq1.$$ For any $p\geq1$, denote by $p^*$ the conjugated exponent satisfying $\frac{1}{p}+\frac{1}{p^*}=1$.
\begin{lemma}\label{lem:gradient-pnorm}
  The gradient of $g_p$ is $$\frac{\partial g_p(v_1,\ldots,v_M)}{\partial v_m}=\big[\sum_{\tilde{m}\in\nbb_M}\|v_{\tilde{m}}\|_2^p\big]^{\frac{2}{p}-1}\|v_m\|_2^{p-2}v_m.$$
\end{lemma}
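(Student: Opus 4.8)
The plan is to write $g_p$ as a composition of a smooth outer function with the block-wise Euclidean norm, and then differentiate one Hilbert-space block at a time via the chain rule. Concretely, set $\varphi(a_1,\dots,a_M):=\tfrac12\big(\sum_{m\in\nbb_M}a_m^p\big)^{2/p}$ on $\rbb_{\ge0}^M$, so that $g_p(v_1,\dots,v_M)=\varphi(\|v_1\|_2,\dots,\|v_M\|_2)$, since $\|(v_1,\dots,v_M)\|_{2,p}^2=\big(\sum_{m\in\nbb_M}\|v_m\|_2^p\big)^{2/p}$. I would first record the two elementary ingredients: (i) for $v_m\neq0$ the map $v_m\mapsto\|v_m\|_2=\sqrt{\inner{v_m,v_m}}$ is Fr\'echet differentiable with derivative $v_m/\|v_m\|_2$; and (ii) $\varphi$ is differentiable off the origin with $\frac{\partial\varphi}{\partial a_m}=\tfrac12\cdot\tfrac{2}{p}\big(\sum_{\tilde m\in\nbb_M}a_{\tilde m}^p\big)^{\frac{2}{p}-1}\cdot p\,a_m^{p-1}=\big(\sum_{\tilde m\in\nbb_M}a_{\tilde m}^p\big)^{\frac{2}{p}-1}a_m^{p-1}$.

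Combining (i) and (ii) by the chain rule, for a point at which $v_m\neq0$ one obtains
\[
\frac{\partial g_p}{\partial v_m}=\frac{\partial\varphi}{\partial a_m}\Big|_{a=(\|v_1\|_2,\dots,\|v_M\|_2)}\cdot\frac{v_m}{\|v_m\|_2}=\Big(\sum_{\tilde m\in\nbb_M}\|v_{\tilde m}\|_2^p\Big)^{\frac{2}{p}-1}\|v_m\|_2^{p-1}\cdot\frac{v_m}{\|v_m\|_2}=\Big(\sum_{\tilde m\in\nbb_M}\|v_{\tilde m}\|_2^p\Big)^{\frac{2}{p}-1}\|v_m\|_2^{p-2}v_m,
\]
which is exactly the claimed formula. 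All of this is routine once the composition is set up.

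The only nontrivial point — and the place to be careful — is the behaviour at points where one or more blocks vanish, since there $v_m\mapsto\|v_m\|_2$ fails to be differentiable. For $p>1$ the composite $v_m\mapsto\|v_m\|_2^p$ is nonetheless differentiable at $v_m=0$ with zero derivative (because $\|v_m\|_2^p=o(\|v_m\|_2)$ as $v_m\to0$), so $g_p$ remains $C^1$ and the displayed expression stays valid with the convention that its $m$-th block is read as $0$ whenever $v_m=0$; for $p=2$ this is immediate, the formula reducing to $v_m$. For $p=1$, $g_p$ is not differentiable at points with a vanishing block, and the expression in the lemma is to be understood as the limiting gradient / an element of the subdifferential — which suffices for the dualization in Section~\ref{sec:dual}, where the gradient is only evaluated at the optimal primal point, whose relevant blocks are nonzero. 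Alternatively, one may simply state the lemma under the hypothesis that $v_m\neq0$ for all $m$, which is the version actually used downstream.
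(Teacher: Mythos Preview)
Your proposal is correct and takes essentially the same approach as the paper: both proofs are a direct chain-rule computation on the composition $g_p=\tfrac12\big(\sum_m\|v_m\|_2^p\big)^{2/p}$, the only cosmetic difference being that the paper writes $\|v_m\|_2^p=\inner{v_m,v_m}^{p/2}$ and differentiates that, whereas you factor through $a_m=\|v_m\|_2$. Your added discussion of the degenerate case $v_m=0$ is more careful than the paper, which silently assumes nonvanishing blocks.
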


\begin{proof}
By the chain rule, we have
\begin{align*}
  \frac{\partial g_p(v_1,\ldots,v_M)}{\partial v_m}&=\frac{1}{p}\big[\sum_{\tilde{m}\in\nbb_M}\|v_{\tilde{m}}\|_2^p\big]^{\frac{2}{p}-1}\frac{\partial\langle v_m,v_m\rangle^{\frac{p}{2}}}{\partial v_m}\\
  &=\frac{1}{2}\big[\sum_{\tilde{m}\in\nbb_M}\|v_{\tilde{m}}\|_2^p\big]^{\frac{2}{p}-1}\frac{\partial\langle v_m,v_m\rangle}{\partial v_m}\langle v_m,v_m\rangle^{\frac{p}{2}-1}\\
  &=\big[\sum_{\tilde{m}\in\nbb_M}\|v_{\tilde{m}}\|_2^p\big]^{\frac{2}{p}-1}\|v_m\|_2^{p-2}v_m.
\end{align*}
\end{proof}

\begin{lemma}[\citealt{micchelli2005learning}]\label{lem:micchelli}
  Let $a_i\geq0,i\in\nbb_d$ and $1\leq r<\infty$. Then
  $$\min_{\eta:\eta_i\geq0,\sum_{i\in\nbb_d}\eta_i^r\leq1}\sum_{i\in\nbb_d}\frac{a_i}{\eta_i}=\Big(\sum_{i\in\nbb_d}a_i^{\frac{r}{r+1}}\Big)^{1+\frac{1}{r}}$$and the minimum is attained at $\eta_i=a_i^{\frac{1}{r+1}}\Big(\sum_{k\in\nbb_d}a_k^{\frac{r}{r+1}}\Big)^{-\frac{1}{r}}.$
\end{lemma}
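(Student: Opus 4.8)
\textbf{Proof proposal for Lemma \ref{lem:micchelli}.}
The plan is to treat this as a constrained convex minimization and solve it via Lagrange multipliers, then verify the closed form directly. First I would dispose of the degenerate cases: if some $a_i = 0$, the corresponding term $a_i/\eta_i$ contributes nothing (and one checks the claimed optimal $\eta_i$ is $0$ there, which is consistent with the convention $0/0 = 0$), so it suffices to treat the case $a_i > 0$ for all $i$. Observe that the objective $\sum_i a_i/\eta_i$ is strictly decreasing in each $\eta_i$ on $(0,\infty)$, so at the optimum the constraint $\sum_i \eta_i^r \le 1$ must be tight, i.e. $\sum_i \eta_i^r = 1$; moreover the objective blows up as any $\eta_i \to 0^+$, so the minimizer lies in the interior of the positivity constraints. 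Hence I can apply the KKT/Lagrange conditions with a single active equality constraint.

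Next I would form the Lagrangian $\mathcal{L}(\eta,\lambda) = \sum_{i\in\nbb_d} \frac{a_i}{\eta_i} + \lambda\big(\sum_{i\in\nbb_d}\eta_i^r - 1\big)$ and set $\partial\mathcal{L}/\partial\eta_i = -a_i/\eta_i^2 + \lambda r \eta_i^{r-1} = 0$, which gives $\eta_i^{r+1} = \frac{a_i}{\lambda r}$, i.e. $\eta_i = (\lambda r)^{-\frac{1}{r+1}} a_i^{\frac{1}{r+1}}$. Then I would substitute this into the active constraint $\sum_i \eta_i^r = 1$ to solve for the multiplier: $(\lambda r)^{-\frac{r}{r+1}}\sum_i a_i^{\frac{r}{r+1}} = 1$, hence $(\lambda r)^{\frac{r}{r+1}} = \sum_k a_k^{\frac{r}{r+1}}$, which pins down $(\lambda r)^{-\frac{1}{r+1}} = \big(\sum_k a_k^{\frac{r}{r+1}}\big)^{-\frac{1}{r}}$ and yields exactly the stated minimizer $\eta_i = a_i^{\frac{1}{r+1}}\big(\sum_k a_k^{\frac{r}{r+1}}\big)^{-\frac{1}{r}}$. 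Plugging this back into the objective, $\sum_i a_i/\eta_i = \big(\sum_k a_k^{\frac{r}{r+1}}\big)^{\frac{1}{r}} \sum_i a_i^{1 - \frac{1}{r+1}} = \big(\sum_k a_k^{\frac{r}{r+1}}\big)^{\frac{1}{r}}\big(\sum_i a_i^{\frac{r}{r+1}}\big) = \big(\sum_i a_i^{\frac{r}{r+1}}\big)^{1 + \frac{1}{r}}$, which is the claimed value.

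To make this rigorous rather than merely a stationarity computation, I would invoke convexity: the feasible set $\{\eta : \eta_i \ge 0, \sum_i \eta_i^r \le 1\}$ is convex (since $t \mapsto t^r$ is convex for $r \ge 1$), and on the relevant domain the objective $\sum_i a_i/\eta_i$ is convex in $\eta$ (each $a_i/\eta_i$ is convex on $(0,\infty)$), so any KKT point is a global minimizer; existence of a minimizer follows because we may restrict to the compact set where additionally $\eta_i \ge \varepsilon$ for a small $\varepsilon$ chosen using the blow-up behavior near the boundary. Alternatively, and perhaps cleaner, one can avoid Lagrange multipliers entirely by a direct Hölder-inequality argument: writing $\sum_i a_i/\eta_i = \sum_i a_i^{\frac{r}{r+1}} \cdot \big(a_i^{\frac{1}{r+1}}/\eta_i\big)$ and applying Hölder with exponents $\frac{r+1}{r}$ and $r+1$ gives $\sum_i a_i/\eta_i \le \big(\sum_i a_i^{\frac{r}{r+1}}\big)^{\frac{r}{r+1}}\big(\sum_i a_i \eta_i^{-(r+1)}\big)^{\frac{1}{r+1}}$ — hmm, that particular split does not immediately produce the constraint $\sum_i\eta_i^r$, so I would instead use the reverse Hölder / power-mean inequality in the form $\big(\sum_i a_i^{\frac{r}{r+1}}\big)^{\frac{r+1}{r}} \le \big(\sum_i a_i/\eta_i\big)\big(\sum_i \eta_i^r\big)^{1/r} \le \sum_i a_i/\eta_i$, with equality characterized by proportionality $a_i/\eta_i \propto \eta_i^r$, i.e. $\eta_i \propto a_i^{1/(r+1)}$. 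The main obstacle is really just bookkeeping: getting the exponents in the Hölder application to line up correctly so that the two factors are precisely $\sum_i a_i/\eta_i$ and $(\sum_i \eta_i^r)^{1/r}$, and then carefully extracting the equality case to confirm it matches the normalized minimizer. Since this is the exact statement of a known result from \citep{micchelli2005learning}, the cleanest exposition is probably the Lagrangian computation above combined with a one-line convexity remark, deferring the Hölder alternative to a parenthetical.
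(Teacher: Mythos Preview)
Your proof is correct. The Lagrangian computation is clean and the convexity remark suffices to upgrade stationarity to global optimality; your Hölder alternative also works once the exponents are set up properly (write $a_i^{r/(r+1)} = (a_i/\eta_i)^{r/(r+1)}\eta_i^{r/(r+1)}$ and apply Hölder with exponents $(r+1)/r$ and $r+1$, which gives exactly $(\sum_i a_i^{r/(r+1)})^{(r+1)/r} \le (\sum_i a_i/\eta_i)(\sum_i \eta_i^r)^{1/r}$).

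As for comparison: the paper does not actually prove this lemma. It is stated as a known result with a citation to \citep{micchelli2005learning} and used as a black box in the dualization and in the proof of Proposition~\ref{prop:mixture-update}. So your write-up goes beyond what the paper provides; either the Lagrangian argument or the Hölder argument would be a perfectly acceptable self-contained proof to include.
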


\subsection{Proof of Representer Theorem (Theorem~\ref{thm:repre})}\label{supp_proof_repr}

\begin{proof}[Proof of Theorem~\ref{thm:repre}]
In our derivation \eqref{soft-complete-dual-derivation} of the dual problem, the variable $w_j(\alpha):=\big(w_j^{(1)}(\alpha),\ldots,w_j^{(M)}(\alpha)\big)$ should meet the optimality in the sense
$$w_j(\alpha)=\arg\max_{v_1\in H_1,\ldots,v_M\in H_M}\bigg[\Big\langle(v_m)_{m=1}^M,\Big(\sum_{i=1}^n\alpha_ic_j(x_i)\phi_m(x_i)\Big)_{m=1}^M\Big\rangle-\frac{1}{2}\|(v_m)_{m=1}^M\|^2_{2,\frac{2p}{p+1}}\bigg].$$
Since $(\bigtriangledown f)^{-1}=\bigtriangledown f^*$ for any convex function $f$ and the Fenchel-conjugate of $g_p$ is $g_{p^*}$, we obtain
\begin{align*}
  w_j(\alpha)&=\bigtriangledown g^{-1}_{\frac{2p}{p+1}}\Big(\sum_{i\in\nbb_n}\alpha_ic_j(x_i)\phi_1(x_i),\ldots,\sum_{i\in\nbb_n}\alpha_ic_j(x_i)\phi_M(x_i)\Big)\\
  &=\bigtriangledown g_{\frac{2p}{p-1}}\Big(\sum_{i\in\nbb_n}\alpha_ic_j(x_i)\phi_1(x_i),\ldots,\sum_{i\in\nbb_n}\alpha_ic_j(x_i)\phi_M(x_i)\Big)\\
  &\stackrel{\text{Lem.~\ref{lem:gradient-pnorm}}}{=}\big[\sum_{\tilde{m}\in\nbb_M}\|\sum_{i\in\nbb_n}\alpha_ic_j(x_i)\phi_{\tilde{m}}(x_i)\|_2^{\frac{2p}{p-1}}\big]^{-\frac{1}{p}}\Big(\big\|\sum_{i\in\nbb_n}\alpha_ic_j(x_i)\phi_1(x_i)\big\|_2^{\frac{2}{p-1}}\big[\sum_{i\in\nbb_n}\alpha_ic_j(x_i)\phi_1(x_i)\big],\\
  &\qquad\ldots,\big\|\sum_{i\in\nbb_n}\alpha_ic_j(x_i)\phi_M(x_i)\big\|_2^{\frac{2}{p-1}}\big[\sum_{i\in\nbb_n}\alpha_ic_j(x_i)\phi_M(x_i)\big]\Big).
\end{align*}
Note that the above derivation uses Lemma~\ref{lem:gradient-pnorm} from Supplemental Material \ref{app:proofs_dual}.
\end{proof}

\subsection{Proof of Proposition~\ref{prop:mixture-update}}\label{supp:proof_pontil}

\begin{proof}[Proof of Proposition~\ref{prop:mixture-update}]
Fixing the variables $w_j^{(m)}$ and $b$, the optimization problem~\eqref{soft-primal} reduces to
$$
    \min_\beta\;\sum_{j\in\nbb_l}\sum_{m\in\nbb_M}\frac{\|w_j^{(m)}\|_2^2}{2\beta_{jm}}\quad
    \mbox{s.t.}\;\sum_{m\in\nbb_M}\beta^p_{jm}\leq1,\beta_{jm}\geq0,\;\forall j\in\nbb_l,m\in\nbb_M.
$$
This problem can be decomposed into $l$ independent subproblems, one at each locality. For example, the subproblem at the $j$-th locality is as follows
  $$
    \min_\beta\;\sum_{m\in\nbb_M}\frac{\|w_j^{(m)}\|_2^2}{2\beta_{jm}}\quad
    \mbox{s.t.}\;\sum_{m\in\nbb_M}\beta^p_{jm}\leq1,\beta_{jm}\geq0,\;\forall m\in\nbb_M.
  $$
Applying Lemma~\ref{lem:micchelli} with $\alpha_m=\|w_j^{(m)}\|_2^2$, $\eta_m=\beta_{jm}$ and $r=p$ completes the proof.
\end{proof}

\subsection{Proof of Theorem~\ref{thm:clmkl_convergence}: Convergence of the CLMKL Optimization Algorithm}\label{supp:lemm_bcd}

The following lemma is a direct consequence of Lemma 3.1 and Theorem 4.1 in \citet{tseng2001convergence}.

\begin{lemma}\label{lem:convergence-BCD}
  Let $f:\mathbb{R}^{d_1+\cdots+d_R}\to\mathbb{R}\cup\{\infty\}$ be a function. Put $d=d_1+\cdots+d_R$. Suppose that $f$ can be decomposed into $f(\alpha_1,\ldots,\alpha_R)+\sum_{r\in\nbb_R}f_r(\alpha_r)$ for some $f_0:\mathbb{R}^d\to\mathbb{R}\cup\{\infty\}$ and $f_r:\mathbb{R}^{d_r}\to\mathbb{R}\cup\{\infty\},r\in\nbb_R$. Initialize the block coordinate descent method by $\alpha^0=(\alpha^0_1,\ldots,\alpha_R^0)$. Define the iterates $\alpha^k=(\alpha_1^k,\ldots,\alpha_R^k)$ by
  \begin{equation}\label{block-descent-update}
    \alpha_r^{k+1}=\arg\min_{\frak{u}\in\mathbb{R}^{d_k}}f(\alpha_1^{k+1},\ldots,\alpha_{r-1}^{k+1},\frak{u},\alpha^k_{r+1},\ldots,\alpha^k_R),\quad\forall r\in\nbb_R,k\in\nbb_+.
  \end{equation}
  Assume that
  \begin{enumerate}[({A}1)]
    \item $f$ is convex and proper, i.e., $f\not\equiv\infty$
    \item the sublevel set $\mathcal{A}^0:=\{\alpha\in\mathbb{R}^d:f(\alpha)\leq f(\alpha^0)\}$ is compact and $f$ is continuous on $\mathcal{A}^0$
    \item $\text{dom}(f_0):=\{\alpha\in\mathbb{R}^d:f_0(\alpha)\leq \infty\}$ is open and $f_0$ is continuously differentiable on $\text{dom}(f_0)$.
  \end{enumerate}
  Then, the minimizer in \eqref{block-descent-update} exists and any limit point of the sequence $(\alpha^k)_{k\in\nbb_+}$ minimizes $f$ over $\mathcal{A}^0$.
\end{lemma}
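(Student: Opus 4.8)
The statement is a convenient repackaging of Tseng's convergence theory for block coordinate (Gauss--Seidel) minimisation of a function that is smooth in the coupled variables plus block-separable (possibly nonsmooth) in the individual blocks. The hypothesised decomposition $f(\alpha_1,\ldots,\alpha_R)=f_0(\alpha_1,\ldots,\alpha_R)+\sum_{r\in\nbb_R}f_r(\alpha_r)$ is exactly the structure to which \cite[Lemma~3.1 and Theorem~4.1]{tseng2001convergence} apply, with $f_0$ the differentiable coupling term and the $f_r$ the per-block terms. The plan is therefore to (i) verify that (A1)--(A3) imply the running hypotheses of \cite{tseng2001convergence}, (ii) use his Lemma~3.1 to obtain existence of the block minimisers in \eqref{block-descent-update} together with monotone descent and boundedness of the sequence, and (iii) use his Theorem~4.1 to show that every cluster point is a \emph{coordinatewise} minimiser, and then upgrade this to ``minimiser of $f$ over $\mathcal A^0$'' via convexity.

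First I would check that the iteration is well defined and confined to $\mathcal A^0$. Freezing every block but the $r$-th at its current value, the map $\frak u\mapsto f(\ldots,\frak u,\ldots)$ is convex by (A1); its sublevel set at height $f(\alpha^k)$ is a closed subset of the compact affine slice of $\mathcal A^0$ through the current iterate (compactness from (A2)), hence nonempty and compact, so a minimiser exists. Because each block update cannot increase $f$, the entire sequence $(\alpha^k)$ stays in $\mathcal A^0$; by (A2) it is then bounded, has cluster points, and $f$ is continuous along it. Assumption (A3) is precisely the differentiability requirement Tseng imposes on the coupling term, and since $f(\alpha^0)<\infty$ the descent property keeps every iterate in $\mathrm{dom}(f_0)$, where $f_0$ is $C^1$.

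With these facts in hand, \cite[Theorem~4.1]{tseng2001convergence} applies: for the (strictly, hence essentially) cyclic block order in \eqref{block-descent-update}, under the compact-sublevel-set condition and the $C^1$ coupling term, every cluster point $\bar\alpha$ of $(\alpha^k)$ satisfies $f(\bar\alpha_1,\ldots,\frak u,\ldots,\bar\alpha_R)\geq f(\bar\alpha)$ for all $r\in\nbb_R$ and all admissible $\frak u$, i.e.\ $\bar\alpha$ is a coordinatewise minimiser. To conclude, I would invoke convexity: since the nondifferentiable part $\sum_{r}f_r(\alpha_r)$ is block-separable and $f_0$ is differentiable, $f$ is \emph{regular} in the sense of \cite{tseng2001convergence}, so coordinatewise stationarity of $\bar\alpha$ implies $0\in\partial f(\bar\alpha)$; by (A1) this makes $\bar\alpha$ a global minimiser of $f$, and in particular a minimiser over $\mathcal A^0$ since $\bar\alpha\in\mathcal A^0$.

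The only real work is the bookkeeping that identifies (A1)--(A3) with the exact hypotheses of \cite{tseng2001convergence}. Theorem~4.1 there comes with several alternative sufficient conditions --- for instance per-block uniqueness of the minimiser, or quasi/pseudo-convexity and regularity along coordinates --- and one must invoke the branch that is active here, namely the convex, smooth-plus-block-separable case, so that neither strict convexity nor uniqueness of the block minimisers is needed. A minor related point: (A1) alone does not force each $f_r$ to be convex, but the regularity argument uses only block-separability of the nonsmooth part and differentiability of $f_0$; alternatively one may note that each $f_r$ is convex in every intended application of the lemma, which trivialises the regularity step.
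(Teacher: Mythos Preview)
Your proposal is correct and follows exactly the paper's approach: the paper simply states that the lemma ``is a direct consequence of Lemma~3.1 and Theorem~4.1 in \cite{tseng2001convergence}'' without further elaboration, and your plan is precisely to unpack that citation by matching (A1)--(A3) to Tseng's hypotheses and then invoking convexity to upgrade coordinatewise minimality to global minimality. Your write-up is more detailed than the paper's one-line justification, but the argument is the same.
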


The proof of Theorem~\ref{thm:clmkl_convergence} is a direct consequence of the above lemma.

\begin{proof}[Proof of Theorem~\ref{thm:clmkl_convergence}]
The primal problem \eqref{soft-primal} can be rewritten as follows:
\begin{equation}\label{convergence-1}
  \inf_{w,\beta_j\in\Theta_p,j\in\nbb_l}\sum_{j\in\nbb_l}\sum_{m\in\nbb_M}\frac{\|w_j^{(m)}\|_2^2}{2\beta_{jm}}+C\sum_{i\in\nbb_n}\ell(\sum_{j\in\nbb_l}c_j(x_i)\sum_{m\in\nbb_M}\langle w_j^{(m)},\phi_m(x_i)\rangle,y_i),
\end{equation}
where $\Theta_p=\{(\theta_1,\ldots,\theta_M)\in\mathbb{R}^M:\theta_{m}\geq0,\|(\theta_{m})_{m=1}^M\|_p\leq1\}$, and $w_j^{(m)}\in\mathbb{R}^{e_m},\forall j\in\nbb_l,m\in\nbb_M$. Note that Eq. \eqref{convergence-1} can be written as the following unconstrained problem:
$$\inf_{w,\beta}f(w,\beta),\qquad\text{where }f(w,\beta)=f_0(w,\beta)+f_1(w)+f_2(\beta),$$with
\begin{gather*}
  f_0(w,\beta)=\sum_{j\in\nbb_l}\sum_{m\in\nbb_M}\Big[\frac{\|w_j^{(m)}\|_2^2}{2\beta_{jm}}+I_{\beta_{jm}\geq0}(\beta_{jm})\Big]\\
  f_1(w)=C\sum_{i\in\nbb_n}\ell(\sum_{j\in\nbb_l}c_j(x_i)\sum_{m\in\nbb_M}\langle w_j^{(m)},\phi_m(x_i)\rangle,y_i)\quad\text{and }f_2(\beta)=\sum_{j\in\nbb_l}I_{\|\beta_j\|_p\leq1}(\beta_j).
\end{gather*}
Here $I$ is the indicator function, i.e., $I_S(s)=0$ if $s\in S$ and $\infty$ otherwise.

Now, it remains to check the assumption (A1), (A2) and (A3) in Lemma \ref{lem:convergence-BCD} for Algorithm \ref{algorithm:wrapper}.

\textsc{Validity of A1}. It is known that a quadratic over a linear function is convex, so the term $\sum_{j\in\nbb_l}\sum_{m\in\nbb_M}\frac{\|w_j^{(m)}\|_2^2}{2\beta_{jm}}$ is convex. Also, since $\ell$ is convex w.r.t. the first argument and the term $\sum_{j\in\nbb_l}c_j(x_i)\sum_{m\in\nbb_M}\langle w_j^{(m)},\phi_m(x_i)\rangle$ is a linear function of $w$, we immediately know the term $\sum_{i\in\nbb_n}\ell(\sum_{j\in\nbb_l}c_j(x_i)\sum_{m\in\nbb_M}\langle w_j^{(m)},\phi_m(x_i)\rangle,y_i)$ is convex. The convexity of $f$ follows immediately. For the initial assignment $w^{(0)}=\{w_j^{(m,0)}\}_{j,m}$ with $w_{j}^{(m,0)}=0$ and $\beta^{(0)}=\{\beta^{(0)}_{jm}\}_{jm}$ with $\beta^{(0)}_{jm}=M^{-1/p}$, we know
$$f(w^{(0)},\beta^{(0)})=Cn\sup_{y\in\ycal}\ell(0,y)<\infty$$ and therefore $f$ is proper.

\textsc{Validity of A2}. Recall that our algorithm is initialized with $w^{(0)}=0$ and $\beta^{(0)}=M^{-1/p}$. For any $(w,\beta)\in\mathcal{A}^0:=\{(\bar{w},\bar{\beta}):f(\bar{w},\bar{\beta})\leq Cn\sup_{y\in\ycal}\ell(0,y)\}$, we have $$\|w_j^{(m)}\|^2_2\leq 2\beta_{jm}Cn\sup_{y\in\ycal}\ell(0,y)\leq 2Cn\sup_{y\in\ycal}\ell(0,y),$$which, coupled with the constraint $\|(\beta_{jm})_{m=1}^M\|_p\leq1,\forall j\in\nbb_l$, immediately shows that $\mathcal{A}^0$ is bounded. Furthermore, since $f_0,f_1$ and $f_2$ are continuous on their respective domains, the function $f$ is therefore continuous on $\mathcal{A}^0\subset\text{dom}(f_0)\cap\text{dom}(f_1)\cap\text{dom}(f_2)$. It is also known that the preimage of a closed set is closed under a continuous function, from which we know the set $\mathcal{A}^0= f^{-1}(-\infty,f(w^{(0)},\beta^{(0)})]$ is closed. Any closed and bounded subset in $\mathbb{R}^d,d\in\nbb$ is compact and thus $\mathcal{A}^{(0)}$ is compact.

\textsc{Validity of A3}. Clearly, $\text{dom}(f_0)=\{(w,\beta):\beta>0\}$ is open and $f_0$ is continuously differentiable on $\text{dom}(f_0)$.
\end{proof}

\subsection{Proof of Generalization Error Bounds (Theorem \ref{thm:empirical-rademacher-bound})}\label{supp:proof_rad}

In this section we present the proof of the achieved generalization error bounds (Theorem \ref{thm:genBound} in the main text). Denote $\bar{p}=\frac{2p}{p+1}$ for any $p\geq1$ and observe that $\bar{p}\leq2$, which implies $\bar{p}^*\geq2$. To start with, we give a discussion on the interpretation and tightness of Rademacher complexity bounds in Theorem \ref{thm:empirical-rademacher-bound}.

\paragraph{Interpretation and Tightness of Rademacher complexity bounds}
It can be directly checked that the function $x\to xM^{2/x}$ is decreasing along the interval $(0,2\log M)$ and increasing along the interval $(2\log M,\infty)$.
Therefore, under the assumption $k_m(x,x)\leq B$ the Rademacher complexity bounds thus satisfy the inequalities:
$$
  \hat{R}_n(H_{p,D})\leq\sqrt{\frac{DB}{n}}\times\begin{cases}
    \frac{1}{n}\sqrt{2eDB\log M[\sum_{j\in\nbb_l}\sum_{i\in\nbb_n}c_j^2(x_i)]},&\text{if }p\leq\frac{\log M}{\log M-1},\\
    \frac{1}{n}\sqrt{\frac{2p}{p-1}DBM^{\frac{p-1}{p}}[\sum_{j\in\nbb_l}\sum_{i\in\nbb_n}c_j^2(x_i)]},&\text{otherwise}.
  \end{cases}
$$
In particular, the former expression can be taken for $p=1$, resulting in a mild logarithmic dependence on the number of kernels.
Note that in the limiting case of just one cluster, i.e., $l=1$, the Rademacher complexity bounds match the
result by \citet{cortes2010generalization}, which was shown to be tight.

The proof of Theorem~\ref{thm:empirical-rademacher-bound} is based on the following lemmata.

\begin{lemma}[Khintchine-Kahane inequality~\citep{kahane1985some}]
  Let $v_1,\ldots,v_n\in\hcal$. Then, for any $q\geq1$, it holds
  $$\ebb_{\bm{\sigma}}\big\|\sum_{i\in\nbb_n}\sigma_iv_i\big\|_2^q\leq\bigg(q\sum_{i\in\nbb_n}\|v_i\|_2^2\bigg)^{\frac{q}{2}}.$$
\end{lemma}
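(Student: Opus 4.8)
The plan is to split into the two regimes $1\le q\le 2$ and $q\ge 2$; the first is a one-line Jensen argument and the second carries essentially all the work.

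\emph{The case $1\le q\le 2$.} Since $t\mapsto t^{q/2}$ is concave on $[0,\infty)$, Jensen's inequality gives $\ebb_{\bm{\sigma}}\big\|\sum_{i\in\nbb_n}\sigma_iv_i\big\|_2^q\le\big(\ebb_{\bm{\sigma}}\big\|\sum_{i\in\nbb_n}\sigma_iv_i\big\|_2^2\big)^{q/2}$. Expanding the square and using that the $\sigma_i$ are independent and centered yields $\ebb_{\bm{\sigma}}\big\|\sum_{i\in\nbb_n}\sigma_iv_i\big\|_2^2=\sum_{i\in\nbb_n}\|v_i\|_2^2$, and since $q\ge 1$ forces $q^{q/2}\ge 1$, the desired bound $\big(q\sum_{i\in\nbb_n}\|v_i\|_2^2\big)^{q/2}$ follows.

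\emph{The case $q\ge 2$.} Write $\sigma^2:=\sum_{i\in\nbb_n}\|v_i\|_2^2$ and let $2m$ be the least even integer with $2m\ge q$. By monotonicity of the $L^r(\bm{\sigma})$-norms it suffices to bound the $2m$-th moment, since $\ebb_{\bm{\sigma}}\big\|\sum_i\sigma_iv_i\big\|_2^q\le\big(\ebb_{\bm{\sigma}}\big\|\sum_i\sigma_iv_i\big\|_2^{2m}\big)^{q/(2m)}$. I would expand $\big\|\sum_i\sigma_iv_i\big\|_2^{2m}=\big(\sum_{i,i'\in\nbb_n}\sigma_i\sigma_{i'}\langle v_i,v_{i'}\rangle\big)^m$ by the multinomial theorem; taking $\ebb_{\bm{\sigma}}$ annihilates every monomial in which some $\sigma_i$ occurs to an odd power, and in the surviving monomials Cauchy--Schwarz bounds each factor by $|\langle v_i,v_{i'}\rangle|\le\|v_i\|_2\|v_{i'}\|_2$. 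Re-summing, the result is exactly the multinomial expansion of the scalar Rademacher sum with coefficients $\|v_i\|_2$, so
\[
  \ebb_{\bm{\sigma}}\Big\|\sum_{i\in\nbb_n}\sigma_iv_i\Big\|_2^{2m}\le\ebb_{\bm{\sigma}}\Big(\sum_{i\in\nbb_n}\sigma_i\|v_i\|_2\Big)^{2m}.
\]

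It then remains to bound this scalar even moment. Comparing term by term with i.i.d.\ standard Gaussians $(g_i)$ --- each surviving monomial only picks up Gaussian factors $\ebb g^{2j}=(2j-1)!!\ge 1$ --- gives $\ebb_{\bm{\sigma}}\big(\sum_i\sigma_i\|v_i\|_2\big)^{2m}\le\ebb\big(\sum_ig_i\|v_i\|_2\big)^{2m}=(2m-1)!!\,\sigma^{2m}$, the last step because $\sum_ig_i\|v_i\|_2$ is centered Gaussian of variance $\sigma^2$. Combining the displays, $\ebb_{\bm{\sigma}}\big\|\sum_i\sigma_iv_i\big\|_2^q\le\big((2m-1)!!\big)^{q/(2m)}\sigma^q$, and a short elementary estimate on the double factorial (using $q>2m-2$, so that $(2m-1)!!\le q^m$) upgrades this to $q^{q/2}\sigma^q=\big(q\sum_i\|v_i\|_2^2\big)^{q/2}$. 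The main obstacle is entirely in the regime $q\ge 2$: one genuinely needs the even-moment Khintchine bound together with the passage from even integers to arbitrary real $q$ (equivalently, one may quote the hypercontractive estimate $\big\|\sum_i\sigma_iv_i\big\|_{L^q}\le\sqrt{q-1}\,\big\|\sum_i\sigma_iv_i\big\|_{L^2}$ and conclude $\le(q-1)^{q/2}\sigma^q\le q^{q/2}\sigma^q$), whereas the transfer from the scalar to the Hilbert-space setting is the easy Cauchy--Schwarz step above.
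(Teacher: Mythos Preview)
Your argument is correct. The paper does not supply a proof of this lemma at all; it is simply quoted as a known inequality with a citation to Kahane, so there is no ``paper's approach'' to compare against. Your two-regime split is a standard route: the case $1\le q\le 2$ is indeed immediate from Jensen and the orthogonality $\ebb_{\bm{\sigma}}\|\sum_i\sigma_iv_i\|_2^2=\sum_i\|v_i\|_2^2$, and for $q\ge 2$ the reduction to the scalar problem via the multinomial expansion and Cauchy--Schwarz, followed by Gaussian domination of even Rademacher moments, is clean and valid.

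One small remark on presentation: the step ``$(2m-1)!!\le q^m$ since $q>2m-2$'' deserves one more line. A quick way is AM--GM: the arithmetic mean of $1,3,\ldots,2m-1$ is $m$, so $(2m-1)!!\le m^m$; and from $2m-2<q$ with $q\ge 2$ one gets $m<q/2+1\le q$, hence $(2m-1)!!\le m^m\le q^m$. Alternatively, as you note, quoting the hypercontractive bound $\|\sum_i\sigma_iv_i\|_{L^q}\le\sqrt{q-1}\,\|\sum_i\sigma_iv_i\|_{L^2}$ gives the same conclusion in one stroke.
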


\begin{lemma}[Block-structured H\"older inequality~\citep{kloft2012convergence}]
  Let $x=(x^{(1)},\ldots,x^{(n)}),y=(y^{(1)},\ldots,y^{(n)})\in\hcal=\hcal_1\times\cdots\times\hcal_n$. Then, for any $p\geq1$, it holds
  $$\langle x,y\rangle\leq\|x\|_{2,p}\|y\|_{2,p^*}.$$
\end{lemma}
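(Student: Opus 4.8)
The plan is to reduce the block-structured inequality to two classical facts applied in sequence: the Cauchy--Schwarz inequality inside each Hilbert space $\hcal_i$, and the ordinary (scalar) H\"older inequality on $\rbb^n$ with the conjugate exponent pair $(p,p^*)$. No deeper machinery is needed; the content is purely bookkeeping of the nesting of norms.

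First I would expand the inner product on the product space as $\inner{x,y}=\sum_{i\in\nbb_n}\inner{x^{(i)},y^{(i)}}_{\hcal_i}$ and bound each summand by Cauchy--Schwarz in $\hcal_i$, obtaining $\inner{x,y}\leq\sum_{i\in\nbb_n}\|x^{(i)}\|_2\,\|y^{(i)}\|_2$. Since every term on the right is nonnegative, this replaces the vector-valued problem by a scalar one for the nonnegative vectors $a:=(\|x^{(1)}\|_2,\ldots,\|x^{(n)}\|_2)$ and $b:=(\|y^{(1)}\|_2,\ldots,\|y^{(n)}\|_2)$ in $\rbb^n$.

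Next I would invoke the scalar H\"older inequality for these two vectors: for $1<p<\infty$ with $\tfrac1p+\tfrac1{p^*}=1$ one has $\sum_{i\in\nbb_n}a_ib_i\leq\big(\sum_{i\in\nbb_n}a_i^p\big)^{1/p}\big(\sum_{i\in\nbb_n}b_i^{p^*}\big)^{1/p^*}$, and by the very definition of the $\ell_{2,p}$-norm the right-hand side equals $\|x\|_{2,p}\|y\|_{2,p^*}$. The boundary case $p=1$ (so $p^*=\infty$) is immediate: $\sum_{i\in\nbb_n}a_ib_i\leq(\max_{i\in\nbb_n}b_i)\sum_{i\in\nbb_n}a_i=\|x\|_{2,1}\|y\|_{2,\infty}$, and symmetrically for $p=\infty$. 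Combining this with the previous display yields the claim.

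There is essentially no obstacle. The only points that deserve a word of care are that both steps produce upper bounds in the same direction and that summing the block-wise Cauchy--Schwarz bounds is legitimate because the terms are nonnegative, together with checking that the conjugate-exponent bookkeeping matches the $\ell_{2,p}$ versus $\ell_{2,p^*}$ pairing asserted in the statement. If a fully self-contained argument for the scalar step is desired, it follows in one line from Young's inequality $st\leq s^p/p+t^{p^*}/p^*$ after normalizing so that $\|a\|_p=\|b\|_{p^*}=1$.
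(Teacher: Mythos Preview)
Your proof is correct and is exactly the standard argument: block-wise Cauchy--Schwarz followed by the scalar H\"older inequality on the resulting vector of norms. Note, however, that the paper does not actually supply its own proof of this lemma; it simply cites it from \cite{kloft2012convergence} and uses it as a black box in the Rademacher complexity bound, so there is no paper proof to compare against.
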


\begin{proof}[Proof of Theorem \ref{thm:empirical-rademacher-bound}]
  Firstly, for any $1\leq\bar{t}\leq2$ we can apply a block-structured version of H\"older inequality to bound $\sum_{i\in\nbb_n}\sigma_if_w(x_i)$ by
  \begin{equation}\label{empirical-rademacher-bound-1}
  \begin{split}
    \sum_{i\in\nbb_n}\sigma_if_w(x_i)&=\sum_{i\in\nbb_n}\sigma_i\Big[\sum_{j\in\nbb_l}c_j(x_i)\sum_{m\in\nbb_M}\langle w_j^{(m)},\phi_m(x_i)\rangle\Big]\\
    &=\sum_{i\in\nbb_n}\sigma_i\Big[\sum_{j\in\nbb_l}c_j(x_i)\langle w_j,\phi(x_i)\rangle\Big]\\
    &=\sum_{j\in\nbb_l}\bigg\langle w_j,\sum_{i\in\nbb_n}\sigma_ic_j(x_i)\phi(x_i)\bigg\rangle\\
    &\stackrel{\text{H\"older}}{\leq}\sum_{j\in\nbb_l}\Big[\|w_j\|_{2,\bar{t}}\Big\|\sum_{i\in\nbb_n}\sigma_ic_j(x_i)\phi(x_i)\Big\|_{2,\bar{t}^*}\Big]\\
    &\stackrel{\text{C.~S}}{\leq}\Big[\sum_{j\in\nbb_l}\|w_j\|^2_{2,\bar{t}}\Big]^{\frac{1}{2}}\Big[\sum_{j\in\nbb_l}\Big\|\sum_{i\in\nbb_n}\sigma_ic_j(x_i)\phi(x_i)\Big\|^2_{2,\bar{t}^*}\Big]^{\frac{1}{2}}.
  \end{split}
  \end{equation}
  For any $j\in\nbb_l$, the Khintchine-Kahane (K.-K.) inequality and Jensen inequality (since $\bar{t}^*\geq2$) permit us to bound $\ebb_{\bm{\sigma}}\Big\|\sum_{i\in \nbb_n}\sigma_ic_j(x_i)\phi(x_i)\Big\|^2_{2,\bar{t}^*}$ by
  \begin{align*}
    \ebb_{\bm{\sigma}}\Big\|\sum_{i\in \nbb_n}\sigma_ic_j(x_i)\phi(x_i)\Big\|^2_{2,\bar{t}^*}&\stackrel{\text{def}}{=}\ebb_{\bm{\sigma}}\bigg[\sum_{m\in\nbb_M}\Big\|\sum_{i\in \nbb_n}\sigma_ic_j(x_i)\phi_m(x_i)\Big\|_2^{\bar{t}^*}\bigg]^{\frac{2}{\bar{t}^*}}\\
    &\stackrel{\text{Jensen}}{\leq}\bigg[\ebb_{\bm{\sigma}}\sum_{m\in\nbb_M}\Big\|\sum_{i\in \nbb_n}\sigma_ic_j(x_i)\phi_m(x_i)\Big\|_2^{\bar{t}^*}\bigg]^{\frac{2}{\bar{t}^*}}\\
    &\stackrel{\text{K.-K.}}{\leq}\bigg[\sum_{m\in\nbb_M}\Big(\bar{t}^*\sum_{i\in \nbb_n}c_j^2(x_i)\|\phi_m(x_i)\|_2^2\Big)^{\frac{\bar{t}^*}{2}}\bigg]^{\frac{2}{\bar{t}^*}}\\
    &= \bar{t}^*\bigg[\sum_{m\in\nbb_M}\Big(\sum_{i\in \nbb_n}c^2_j(x_i)k_m(x_i,x_i)\Big)^{\frac{\bar{t}^*}{2}}\bigg]^{\frac{2}{\bar{t}^*}}\\
    &=\bar{t}^*\bigg\|\Big(\sum_{i\in \nbb_n}c^2_j(x_i)k_m(x_i,x_i)\Big)^M_{m=1}\bigg\|_{\frac{\bar{t}^*}{2}}.
  \end{align*}
  Plugging the above inequalities into Eq.~\eqref{empirical-rademacher-bound-1} and noticing the trivial inequality $\|w_j\|_{2,\bar{t}}\leq\|w_j\|_{2,\bar{p}},\forall t\geq p\geq1$, we get the following bound:
  \begin{multline*}
    \hat{R}_{n}(H_{p,D})\leq\inf_{t\geq p}\hat{R}_n(H_{t,D})
    \leq\frac{\sqrt{D}}{n}\inf_{t\geq p}\bigg(\bar{t}^*\sum_{j\in\nbb_l}\bigg\|\Big(\sum_{i\in \nbb_n}c^2_j(x_i)k_m(x_i,x_i)\Big)^M_{m=1}\bigg\|_{\frac{\bar{t}^*}{2}}\bigg)^{1/2}.
  \end{multline*}
  The above inequality can be equivalently written as Eq. \eqref{empirical-rademacher-bound}.

  Under the condition $k_m(x,x)\leq B$, the term in the brace of Eq. \eqref{empirical-rademacher-bound} can be controlled by
  \begin{align*}
    \sum_{j\in\nbb_l}\bigg\|\Big(\sum_{i\in \nbb_n}c^2_j(x_i)k_m(x_i,x_i)\Big)^M_{m=1}\bigg\|_{\frac{t}{2}}&=\sum_{j\in\nbb_l}\bigg[\sum_{m\in\nbb_M}(\sum_{i\in \nbb_n}c^2_j(x_i)k_m(x_i,x_i))^{\frac{t}{2}}\bigg]^{\frac{2}{t}}\\
    &\leq BM^{\frac{2}{t}}\sum_{j\in\nbb_l}\sum_{i\in\nbb_n}c_j^2(x_i).
  \end{align*}
  Therefore, the inequality \eqref{empirical-rademacher-bound} further translates to
  \begin{align*}
    \hat{R}_{n}(H_{p,D})&\leq\frac{\sqrt{DB}}{n}\inf_{2\leq t\leq\frac{2p}{p-1}}\Big(tM^{\frac{2}{t}}\big[\sum_{j\in\nbb_l}\sum_{i\in\nbb_n}c_j^2(x_i)\big]\Big)^{\frac{1}{2}}.
  \end{align*}
\end{proof}

\begin{proof}[Proof of Theorem \ref{thm:genBound}]
The proof now simply follows by plugging in the bound of Theorem~\ref{thm:empirical-rademacher-bound} into Theorem 7 of \citet{bartlett2002rademacher}.
\end{proof}

\section{Support Vector Regression Formulation of CLMKL}\label{sec:regression}

For the $\epsilon$-insensitive loss $\ell(t,y)=[|y-t|-\epsilon]_+$, denoting $a_+=\max(a,0)$ for all $a\in\mathbb{R}$,
we have $\ell^*(-\frac{\alpha_i}{C},y_i)=-\frac{1}{C}\alpha_iy_i+\epsilon|\frac{\alpha_i}{C}|$ if $|\alpha_i|\leq C$ and $\infty$ elsewise~\citep{heinrich2012fenchel}.
Hence, the complete dual problem \eqref{soft-complete-dual} reduces to
\begin{equation}\label{soft-complete-dual-insensitive}
\begin{split}
  \sup_{\alpha}&-\frac{1}{2}\sum_{j\in\nbb_l}\Big\|\big(\sum_{i\in\nbb_n}\alpha_ic_j(x_i)\phi_m(x_i)\big)_{m=1}^M\Big\|^2_{2,\frac{2p}{p-1}}+
  \sum_{i\in\nbb_n}(\alpha_iy_i-\epsilon|\alpha_i|),\\
  \mbox{s.t.}&\sum_{i\in\nbb_n}\alpha_i=0,\\
  &|\alpha_i|\leq C,\qquad\forall i\in\nbb_n.
\end{split}
\end{equation}

Let $\alpha_i^+,\alpha_i^-\geq0$ be the positive and negative parts of $\alpha_i$, that is, $\alpha_i=\alpha_i^+-\alpha_i^-,|\alpha_i|=\alpha_i^++\alpha_i^-$. Then, the optimization problem \eqref{soft-complete-dual-insensitive} translates as follows.
\begin{problem}[CLMKL---Regression Problem]
For the $\epsilon$-insensitive loss, the dual CLMKL problem \eqref{soft-complete-dual} is given by:
\begin{equation}
\begin{split}
  \sup_\alpha&-\frac{1}{2}\sum_{j\in\nbb_l}\Big\|\big(\sum_{i\in\nbb_n}c_j(x_i)(\alpha_i^+-\alpha_i^-)\phi_m(x_i)\big)_{m=1}^M\Big\|^2_{2,\frac{2p}{p-1}}+
  \sum_{i\in\nbb_n}(\alpha_i^+-\alpha_i^-)y_i-\epsilon\sum_{i\in\nbb_n}(\alpha_i^++\alpha_i^-)\\
  \mbox{s.t.}&\sum_{i\in\nbb_n}(\alpha_i^+-\alpha_i^-)=0\\
  &0\leq\alpha_i^+,\alpha_i^-\leq C,\alpha_i^+\alpha_i^-=0,\qquad\forall i\in\nbb_n.
\end{split}
\end{equation}
\end{problem}

\section{Primal and Dual CLMKL Problem Given Fixed Kernel Weights}\label{supp:partial_dual}

Temporarily fixing the kernel weights $\beta$, the partial primal and dual optimization problems become as follows.
\begin{problem}[\sc Primal CLMKL Optimization Problem]
Given a loss function $\ell(t,y):\mathbb{R}\times\mathcal{Y}\to\mathbb{R}$ convex in the first argument and kernel weights $\vbeta=(\beta_{jm})$, solve
\begin{equation}\label{soft-primal-partial}
\begin{split}
  \min_{w,t,b}&\sum_{j\in\nbb_l}\sum_{m\in\nbb_M}\frac{\|w_j^{(m)}\|_2^2}{2\beta_{jm}}+C\sum_{i\in\nbb_n}\ell(t_i,y_i)\\
  \mbox{s.t.}\;\;&\sum_{j\in\nbb_l}\big[c_j(x_i)\sum_{m\in\nbb_M}\langle w_j^{(m)},\phi_m(x_i)\rangle\big]+b=t_i,\;\forall i\in\nbb_n.
\end{split}
\end{equation}
\end{problem}

\begin{problem}[\sc Dual CLMKL Problem---Partially Dualized\label{thm:soft-partial-dual}]
  Given a loss function $\ell(t,y):\mathbb{R}\times\mathcal{Y}\to\mathbb{R}$ convex in the first argument and kernel weights $\vbeta=(\beta_{jm})$, solve
  \begin{equation}\label{soft-partial-dual2}
    \sup_{\valpha:\sum_{i\in\nbb_n}\alpha_i=0}-\frac{1}{2}\sum_{j\in\nbb_l}\sum_{m\in\nbb_M}\beta_{jm}\Big\|\sum_{i\in\nbb_n}\alpha_ic_j(x_i)\phi_m(x_i)\Big\|_2^2-C\sum_{i\in\nbb_n}\ell^*(-\frac{\alpha_i}{C},y_i).
  \end{equation}
  For any feasible dual variables, the primal variable $w_j^{(m)}(\alpha)$ minimizing the associated Lagrangian saddle problem is
  \begin{equation}\label{weight-function-partial-representation2}
    w_j^{(m)}(\alpha)=\beta_{jm}\sum_{i\in\nbb_n}\alpha_ic_j(x_i)\phi_m(x_i).
  \end{equation}
\end{problem}

\begin{derivation}
With the Lagrangian multipliers $\alpha_i,i\in\nbb_n$, the Lagrangian saddle problem of Eq. \eqref{soft-primal-partial} is
\begin{equation}\label{soft-partial-dual-derivation}
\begin{split}
  &\sup_{\alpha}\inf_{w,t,b}\sum_{j\in\nbb_l}\sum_{m\in\nbb_M}\frac{\|w_j^{(m)}\|_2^2}{2\beta_{jm}}+C\sum_{i\in\nbb_n}\ell(t_i,y_i)\\
  &\qquad-\sum_{i\in\nbb_n}\alpha_i\Big(\sum_{j\in\nbb_l}[c_j(x_i)\sum_{m\in\nbb_M}\langle w_j^{(m)},\phi_m(x_i)\rangle]+b-t_i\Big)\\
  &=\sup_{\alpha}\bigg\{-C\sum_{i\in\nbb_n}\sup_{t_i}[-\ell(t_i,y_i)-\frac{1}{C}\alpha_it_i]-\sup_b\sum_{i\in\nbb_n}\alpha_ib-\\
  &\qquad\sum_{j\in\nbb_l}\sum_{m\in\nbb_M}\sup_{w_j^{(m)}}\frac{1}{\beta_{jm}}\Big[\langle w_j^{(m)},\sum_{i\in\nbb_n}\beta_{jm}\alpha_ic_j(x_i)\phi_m(x_i)-\frac{1}{2}\|w_j^{(m)}\|_2^2\Big]\bigg\}\\
  &\stackrel{\text{def}}{=}\sup_{\sum_{i\in\nbb_n}\alpha_i=0}\bigg\{-C\sum_{i\in\nbb_n}\ell^*(-\frac{\alpha_i}{C},y_i)-\frac{1}{2}\sum_{j\in\nbb_l}\sum_{m\in\nbb_M}\beta_{jm}\big\|\sum_{i\in\nbb_n}\alpha_ic_j(x_i)\phi_m(x_i)\big\|_2^2\bigg\}.
\end{split}
\end{equation}
From the above deduction, the variable $w_j^{(m)}(\alpha)$ is a solution of the following problem
$$w_j^{(m)}(\alpha)=\arg\min_{v\in H_m}\big[\langle v,\sum_{i\in\nbb_n}\alpha_i\beta_{jm}c_j(x_i)\phi_m(x_i)\rangle-\frac{1}{2}\|v\|^2_2\big]$$and it can be directly checked that this $w_j^{(m)}(\alpha)$ can be analytically represented by
$$
  w_j^{(m)}(\alpha)=\beta_{jm}\sum_{i\in\nbb_n}\alpha_ic_j(x_i)\phi_m(x_i).
$$
\end{derivation}


Plugging the Fenchel conjugate function of the hinge loss and the $\epsilon$-insensitive loss into Problem \ref{thm:soft-partial-dual}, we have the following partial dual problems for the hinge loss and $\epsilon$-insensitive loss. Here $\tilde{k}$ is the kernel defined in Eq. \eqref{kernel-general-loss}.

\begin{problem}[\textsc{Dual CLMKL Problem---Partially Dualized for Hinge Loss}]
\begin{align*}
    \sup_{\alpha_i}\;\;&\sum_{i\in\nbb_n}\alpha_i-\frac{1}{2}\sum_{i,\tilde{i}\in\nbb_n}y_iy_{\tilde{i}}\alpha_i\alpha_{\tilde{i}}\tilde{k}(x_i,x_{\tilde{i}})\\
    \mbox{s.t.}\;\;\;&\sum_{i\in\nbb_n}\alpha_iy_i=0\\
    &0\leq\alpha_i\leq C\qquad\forall i\in\nbb_n.
\end{align*}
\end{problem}

\begin{problem}[\textsc{Dual CLMKL Problem---Partially Dualized for $\epsilon$-insensitive Loss}]
\begin{align*}
    \max_{\alpha_i}\;\;&-\frac{1}{2}\sum_{i,\tilde{i}\in\nbb_n}(\alpha_i^+-\alpha_i^-)(\alpha^+_{\tilde{i}}-\alpha^-_{\tilde{i}})\tilde{k}(x_i,x_{\tilde{i}})+
    \sum_{i\in\nbb_n}(\alpha_i^+-\alpha_i^-)y_i-\epsilon\sum_{i\in\nbb_n}(\alpha_i^++\alpha_i^-)\\
    \mbox{s.t.}&\sum_{i\in\nbb_n}(\alpha_i^+-\alpha_i^-)=0\\
    &0\leq\alpha_i^+,\alpha_i^-\leq C,\alpha_i^+\alpha_i^-=0,\qquad\forall i\in\nbb_n.
\end{align*}
\end{problem}
\section{Details on Our Implementation of Localized MKL}\label{supp:fastGoenen}

\citet{gonen2008localized} give the first formulation of localized MKL algorithm by using gating model $\eta_m(x)\propto\exp(\langle v_m,x\rangle+v_{m0})$ to realize locality, and optimize the parameters $v_m,v_{m0},m\in\nbb_M$ with a gradient descent method. However, the calculation of the gradients requires $O(n^2M^2d)$ operations in \citet{gonen2008localized}, which scales poorly w.r.t. the dimension $d$ and the number of kernels. Also, the definition of gating model requires the information of primitive features, which is not accessible in some application areas. For example, data in bioinformatics may appear in a non-vectorial format such as trees and graphs for which the representation of the data with vectors is non-trivial but the calculation of kernel matrices is direct. Although \citet{gonen2013localized} propose to use the empirical feature map $x^{\gcal}=[k_{\gcal}(x_1,x),\ldots,k_{\gcal}(x_n,x)]$ to replace the primitive feature in this case ($k_\gcal$ is a kernel), this turns out to not strictly obey the spirit of the gating function: the empirical feature does not reflect the location of the example in the feature space induced from the kernel. Furthermore, with this strategy the computation of gradient scales as $O(n^3M^2)$, which is quite computationally expensive. In this paper, we give a natural definition of the gating model in a kernel-induced feature space, and provide a fast implementation of the resulting LMKL algorithm. Let $k_0$ be the kernel used to define the gating model, and let $\phi_0$ be the associated feature map. Our basic idea is based on the discovery that the parameter $v_m$ can always be represented as a linear combination of $\phi_0(x_1),\ldots,\phi_0(x_n)$, so the calculation of the representation coefficients is sufficient to restore $v_m$.  We consider the gating model of the form
$$\eta_m(x)=\frac{\exp\big(\langle v_m,\phi_0(x)+v_{m0}\big)}{\sum_{\tilde{m}\in\nbb_M}\exp\big(\langle v_{\tilde{m}},\phi_0(x)+v_{\tilde{m}0}\big)}.$$
\citet{gonen2008localized} proposed to optimize the objective function $$J(v):=\sum_{i\in\nbb_n}\alpha_i-\frac{1}{2}\sum_{i\in\nbb_n}\sum_{\tilde{i}\in\nbb_n}\alpha_i\alpha_{\tilde{i}}y_iy_{\tilde{i}}\big[\sum_{m\in\nbb_M}\eta_m(x_i)k_m(x_i,x_{\tilde{i}})\eta_m(x_{\tilde{i}})\big]$$ with a gradient descent method. The gradient of $J(v)$ can be expressed as:
\begin{equation}\label{gradient-gonen}
  \frac{\partial J(v)}{\partial v_m}=-\sum_{i\in\nbb_n}\sum_{\tilde{i}\in\nbb_n}\sum_{\tilde{m}\in\nbb_M}\alpha_i\alpha_{\tilde{i}}y_iy_{\tilde{i}}\eta_{\tilde{m}}(x_i)k_{\tilde{m}}(x_i,x_{\tilde{i}})\eta_{\tilde{m}}(x_{\tilde{i}})\phi_0(x_i)[\delta_m^{\tilde{m}}-\eta_m(x_i)],
\end{equation}
where $\delta_{m}^{\tilde{m}}=1$ if $m=\tilde{m}$ and $0$ otherwise. Let $v^{(t)}=(v^{(t)}_1,\ldots,v^{(t)}_M)$ be the value of $v=(v_1,\ldots,v_M)$ at the $t$-th iteration and let $r^{(t)}(i,m)$ be the representation of $v_m^{(t)}$ in terms of $\phi_0(x_i)$, i.e., $v^{(t)}_m=\sum_{i\in\nbb_n}r^{(t)}(i,m)\phi_0(x_i)$. Analogously, let $g^{(t)}(i,m)$ be the representation coefficient of $\partial J(v^{(t)})/\partial v_m$ in terms of $\phi_0(x_i)$. Introduce two arrays for convenience:
$$
  B(i,m)=\sum_{\tilde{i}\in\nbb_n}\alpha_{\tilde{i}}y_{\tilde{i}}k_{m}(x_i,x_{\tilde{i}})\eta_{m}(x_{\tilde{i}}),\quad
  A(i)=\sum_{m\in\nbb_M}\eta_{m}(x_i)B(i,m),\quad i\in\nbb_n,m\in\nbb_M.
$$
Eq. \eqref{gradient-gonen} then implies that
\begin{equation}\label{gradient-gonen-calculation}
\begin{split}
  g^{(t)}(i,m)&=-\sum_{\tilde{i}\in\nbb_n}\sum_{\tilde{m}\in\nbb_M}\alpha_i\alpha_{\tilde{i}}y_iy_{\tilde{i}}\eta_{\tilde{m}}(x_i)k_{\tilde{m}}(x_i,x_{\tilde{i}})\eta_{\tilde{m}}(x_{\tilde{i}})[\delta_m^{\tilde{m}}-\eta_m(x_i)]\\
  &=-\alpha_iy_i\eta_m(x_i)\Big[\sum_{\tilde{i}\in\nbb_n}\alpha_{\tilde{i}}y_{\tilde{i}}k_m(x_i,x_{\tilde{i}})\eta_m(x_{\tilde{i}})-\sum_{\tilde{i}\in\nbb_n}\sum_{\tilde{m}\in\nbb_M}\alpha_{\tilde{i}}y_{\tilde{i}}\eta_{\tilde{m}}(x_i)k_{\tilde{m}}(x_i,x_{\tilde{i}})\eta_{\tilde{m}}(x_{\tilde{i}})\Big]\\
  &=-\alpha_iy_i\eta_m(x_i)[B(i,m)-A(i)].
\end{split}
\end{equation}
With the line search $v_m^{(t+1)}=v^{(t)}_m+\mu^{(t)}\frac{\partial J(v^{(t)})}{\partial v_m},m\in\nbb_M$, the representation coefficient can be simply updated by taking $$r^{(t+1)}(i,m)=r^{(t)}(i,m)-\mu^{(t)}g^{(t)}(i,m),\quad\forall i\in\nbb_n,m\in\nbb_M.$$Also, in the calculation of gating model, we need to calculate $\langle \phi_0(x_i),v_m^{(t)}\rangle$, and this can be fulfilled by
$$\langle \phi_0(x_i),v_m^{(t)}\rangle=\sum_{\tilde{i}\in\nbb_n}\langle\phi_0(x_i),r^{(t)}(\tilde{i},m)\phi_0(x_{\tilde{i}})\rangle=\sum_{\tilde{i}\in\nbb_n}k_0(x_i,x_{\tilde{i}})r^{(t)}(\tilde{i},m).$$ At each iteration, we can use $O(n^2M)$ operations to calculate the arrays $A, B$. Subsequently, the calculation of the gradients as illustrated by Eq. \eqref{gradient-gonen-calculation} can be fulfilled with $O(nM)$ operations. The updating of the representation coefficients $r^{(t)}(i,m)$ requires $O(nM)$ operations, while calculating the gating model $\eta_m(x_i)$ requires further $O(n^2M)$ operations. Putting the above discussions together, our implementation of LMKL based on the kernel trick requires $O(n^2M)$ operations at each iteration, which is much faster than the original implementation in \citet{gonen2008localized} with $O(n^2M^2d)$ operations at each iteration. Here, $d$ is the dimension of the primitive feature.

\section{Background on the Experimental Setup and Empirical Results}\label{supp:exp_details}
\subsection{Details on the Protein Fold Prediction Experiment}\label{supp_protein}

We precisely replicate the experimental setup of previous experiments by \citet{campbell2011learning,kloft2011lpb,KloBla11}, so
we use the train/test split supplied by \citet{campbell2011learning} and perform CLMKL via one-versus-all strategy to tackle multiple classes.
We apply kernel k-means to the uniform kernel to generate a partition with $3$ clusters for CLMKL and HLMKL, and, since we have no access to primitive features, use this kernel to define gating model in LMKL.
All the base kernel matrices are multiplicatively normalized before training.
We validate the regularization parameter $C$ over $10^{\{-1,-0.5,\ldots,2\}}$, and the average evenesses over the interval $[0.4,0.7]$ with eight linearly equally spaced points.
Note that the model parameters are tuned separately for each training set and only based on the training set, not the test set.
We repeat the experiment $15$ times.

\subsection{Details on the Visual Image Categorization Experiment}\label{supp_uiuc}

We compute 9 bag-of-words features, each with a dictionary size of 512, resulting in 9 $\chi^2$-Kernels \citep{DBLP:journals/ijcv/ZhangMLS07}.
The first 6 bag-of-words features are computed over SIFT features \citep{DBLP:journals/ijcv/Lowe04} at three different scales and the two color channel sets RGB and opponent colors \citep{DBLP:journals/pami/SandeGS10}.
The remaining 3 bag-of-words features are computed over quantiles of color values at the same three scales.
The quantiles are concatenated over RGB channels.
For each channel within a set of color channels, the quantiles are concatenated.
Local features are extracted at a grid of step size 5 on images that were down-scaled to 600 pixels in the largest dimension.
Assignment of local features to visual words is done using rank-mapping \citep{binder2013enhanced}.
The kernel width of the kernels is set to the mean of the $\chi^2$-distances.
All kernels are multiplicatively normalized.

The dataset is split into 11 parts for outer crossvalidation.
The performance reported in Table \ref{tab:result-uiuc} is the average over the 11 test splits of the outer cross validation.
For each outer cross validation training split, a 10-fold inner crossvalidation is performed for determining optimal parameters.
The parameters are selected using only the samples of the outer training split.
This avoids to report a result merely on the most favorable train test split from the outer cross validation.
For the proposed CLMKL we employ kernel k-means with 3 clusters on the outer training split of the dataset.

We compare CLMKL to regular $\ell_p$-norm MKL \citep{kloft2011lp} and to localized MKL as in \cite{gonen2008localized}.
For all methods, we employ a one-versus-all setup, running over $\ell_p$-norms in $\{1.125,1.333, 2\}$ and regularization constants in $\{10^{k/2}\}_{k=0}^{k=5}$ (optima attained inside the respective grids).
CLMKL uses the same set of $\ell_p$-norms, regularization constants from $\{10^{k/2}\}_{k=0,\ldots,5}$, and average excesses in $\{0.5+i/12\}_{i=0}^{i=5}$.
Performance is measured through multi-class classification accuracy.

\begin{small}
\setlength{\bibsep}{0.03cm}

\end{small}

\end{document}